\documentclass[twoside]{article}

\usepackage[accepted]{arxiv}

\usepackage[square,numbers]{natbib}

\usepackage[utf8]{inputenc} %
\usepackage[T1]{fontenc}    %
\usepackage{url}            %
\usepackage{nicefrac}       %
\usepackage{microtype}      %
\usepackage[normalem]{ulem}

\usepackage{amsfonts}
\usepackage{amsmath}
\usepackage{amssymb}
\usepackage{mathtools}
\usepackage{cancel}

\usepackage{graphicx}
\usepackage[font=small,labelfont=bf]{caption}
\usepackage{subcaption}
\usepackage{adjustbox}
\usepackage{wrapfig}
\graphicspath{{figures/pdf/}}
\DeclareGraphicsExtensions{.pdf}

\usepackage{tikz}
\usetikzlibrary{arrows}
\usetikzlibrary{arrows.meta}
\usetikzlibrary{backgrounds}
\usetikzlibrary{bayesnet}
\usetikzlibrary{calc}
\usetikzlibrary{shapes}

\usepackage{algorithm}
\usepackage{algorithmic}

\usepackage{booktabs}
\usepackage{makecell}
\usepackage{multirow}
\usepackage{rotating}
\usepackage{tabularx}
\usepackage{tabu}

\usepackage{enumitem}
\usepackage{xspace}

\usepackage{thmtools}
\usepackage{thm-restate}

\usepackage{xcolor}
\definecolor{midgreen}{rgb}{0.1,0.5,0.1}
\definecolor{darkgray}{gray}{0.25}
\definecolor{lightblue}{rgb}{0.25,0.25,1}
\usepackage[colorlinks,linkcolor=midgreen,citecolor=darkgray,urlcolor=lightblue]{hyperref}
\usepackage[capitalise]{cleverref}
\creflabelformat{equation}{#2#1#3}

\usepackage{macro}

\begin{document}

\twocolumn[%
    \aistatstitle{On Data Efficiency of Meta-learning}
    \aistatsauthor{Maruan~Al-Shedivat \And Liam~Li \And  Eric~Xing \And Ameet~Talwalkar}
    \aistatsaddress{CMU \And  Determined AI \And CMU, Petuum Inc. \And CMU, Determined AI}%
]

\begin{abstract}%
\hyphenpenalty=1000%
Meta-learning has enabled learning statistical models that can be quickly adapted to new prediction tasks.
Motivated by use-cases in personalized federated learning, we study the often overlooked aspect of the modern meta-learning algorithms---their data efficiency.
To shed more light on which methods are more efficient, we use techniques from algorithmic stability to derive bounds on the transfer risk that have important practical implications, indicating how much supervision is needed and how it must be allocated for each method to attain the desired level of generalization.
Further, we introduce a new simple framework for evaluating meta-learning methods under a limit on the available supervision, conduct an empirical study of \maml, \reptile, and \protonets, and demonstrate the differences in the behavior of these methods on few-shot and federated learning benchmarks.
Finally, we propose \emph{active meta-learning}, which incorporates active data selection into learning-to-learn, leading to better performance of all methods in the limited supervision regime.
\end{abstract}

\section{Introduction}
\label{sec:introduction}

One of the emerging applications of meta-learning \citep{bengio1990learning, schmidhuber1992learning, hochreiter2001learning} is the problem of personalization in federated learning settings \citep{smith2017federated, kairouz2019advances, li2020federated}.
Multiple recent works have explored the parallels between personalizing models to different users in a federated context and adapting models to different tasks in a multitask context \citep{chen2018federated, khodak2019adaptive, jiang2019improving, yu2020salvaging}.
While the initial results from these efforts are promising, there are still many open questions when it comes to applying meta-learning to personalization in federated settings.
In this work, we aim to understand which of the modern meta-learning algorithms, and under which conditions, are best suited for personalization.

It is tempting to extrapolate the performance of meta-learning methods on standard few-shot learning benchmarks \citep{lake2015human, vinyals2016matching, ravi2016optimization} to the federated learning setting.
Unfortunately, the training and evaluation routines used for benchmarking violate some of the modeling assumptions of federated learning and potentially other real-world scenarios.
Specifically, the current practice is to train meta-learning methods on a large number of programmatically constructed supervised few-shot tasks sampled from an underlying labeled meta-dataset and then evaluate them on a small set of test tasks~\citep{triantafillou2019metadataset}.
This approach implicitly assumes a full control over and an unrestricted access to the training data, allows to train on combinatorially many tasks that reuse the underlying labeled data and, as a result, ignores the associated labeling costs.
In personalization, however, when tasks correspond to different users and labels correspond to user-specific preferences, ratings, etc., the data is private and cannot be reused across multiple tasks and labeling user data can be quite costly, which makes existing evaluation practices often ill-suited.

For a learning-to-learn method to work well in federated settings, it must be data efficient and able to generalize to new tasks under a limit on the available supervision.
Our work is motivated by the current lack of basic understanding of generalization properties of modern meta-learning algorithms.
To this end, we analyze theoretically two major families of modern algorithms---\emph{gradient-based} (\maml~\citep{finn2017maml} and \reptile~\citep{nichol2018reptile}) and \emph{metric-based} (\protonets~\citep{snell2017prototypical})---and characterize how the number of training tasks and the number of labeled data points per task affect performance of each method.

To validate our theory and understand data efficiency of different methods in practice, we further introduce an alternative evaluation framework for meta-learning, where we measure performance as a function of the \emph{supervision budget} or the total amount of labeled data across training tasks.
Despite the conceptual simplicity, our framework allows to compare meta-learning algorithms under more realistic assumptions and reveals interesting and practically relevant tradeoffs.
Finally, to improve data efficiency a step further, we introduce \emph{active meta-learning}---a method-agnostic approach that extends meta-learning with active data selection at training time and yields improved empirical performance on the benchmarks under limited supervision.

\vspace{-1.75ex}
\paragraph{Contributions.}
\begin{enumerate}[topsep=-2pt,itemsep=0pt,leftmargin=22pt]
    \item We characterize data-efficiency of modern meta-learning methods theoretically using techniques from algorithmic stability \citep{bousquet2002stability, maurer2005algorithmic, hardt2015train} and provide generalization bounds that indicate how much supervision is needed and how it must be allocated for each method to attain the desired performance.
	\item We analyze \maml, \reptile, and \protonets experimentally both on the standard \omniglot and \miniimagenet meta-datasets as well as on federated learning benchmarks \citep{caldas2018leaf}.
	Our results support predictions of the stability theory, reveal the relative differences between the methods in the limited supervision regime, and provide insights into how to best allocate the available supervision.
	\item Finally, we benchmark meta-learning methods with and without active data selection at training time and demonstrate improved performance of active meta-learning under limited supervision.
\end{enumerate}

\section{Related Work}
\label{sec:related_work}

\paragraph{Meta-learning theory.}
Our analysis builds on the classical notion of algorithmic stability of \citet{bousquet2002stability} and extends the bounds of \citet{maurer2005algorithmic} to modern gradient-based and metric-based methods.
Recent theoretical work on meta-learning has largely focused on gradient-based methods in online settings \citep{khodak2019adaptive, khodak2019provable, finn2019online, denevi2019learning, denevi2019online}, providing sharper bounds but under stronger assumptions on smoothness and convexity than those required by our stability theory.
Several other works have studied convergence properties of gradient-based meta-learning from the optimization standpoint rather than generalization~\citep[\eg,][]{franceschi2018bilevel, fallah2020convergence}.
To the best of our knowledge, none of the previous work provides sufficient insight into the data efficiency aspects of modern meta-learning algorithms.

\vspace{-1.75ex}
\paragraph{Federated learning.}
While the classical problem of federated learning involves estimation of a single, global model from heterogeneous data~\citep{mcmahan2016communication}, many recent works have pointed out the growing importance of tailoring models to each individual user~\citep{smith2017federated, yu2020salvaging}.
Gradient-based meta-learning has been explored empirically~\citep{chen2018federated, jiang2019improving} and analyzed theoretically \citep{charles2020outsized} as a natural choice for this problem.
However, personalization in federated settings is still in a fairly nascent state~\citep{kairouz2019advances} and our work aims to make a step toward better understanding of meta-learning in this new context.

\paragraph{Active learning.}
Combinations of active and few-shot learning have been explored in prior work in multiple settings:
\citet{woodword2017activeoneshot} analyzed active-learning in a streaming setting.
\citet{boney2017semiactive} showed that active learning can improve performance at test time on new tasks, but did not consider active learning at meta-training time.
The works of \citet{bachman2017learning} and \citet{ravi2018batchactive} most closely resemble our setup, but neither approach was evaluated in the limited supervision regime or considered the problem of personalization in federated learning.

\section{Background}
\label{sec:background}

We start by introducing the preliminaries necessary to state our theoretical results as well as overview the key elements of modern meta-learning algorithms.\footnote{We assume that the reader is generally familiar with gradient-based and metric-based meta-learning~\citep{finn2017maml, nichol2018reptile, snell2017prototypical}.
Our overview is mainly focused on establishing the notation.}

\subsection{Meta-learning Formulation}
\label{sec:meta-learning-formulation}

Meta-learning operates in a multi-task setting, where the goal is to design a meta-algorithm $\Ab$ that can process data from multiple tasks $\{\Tc_1, \Tc_2, \ldots\}$ and output a learning algorithm $A$.
Given a task $\Tc_i$, the latter must be able to produce an accurate model for that task.
In the context of federate learning (FL), tasks correspond to users and are represented by their personal datasets.

A meta-algorithm is data-efficient if a small number of training tasks (with only few data points per task) is sufficient for it to produce a good learning algorithm.
Assuming that all tasks originate from a common underlying distribution, $\Tc \sim \Pb$, we are interested in \emph{meta-generalization} of $\Ab$, \ie, how many training tasks and how much data per task is necessary to ensure a certain level of performance on future tasks sampled from $\Pb$.

In this paper, we focus on \emph{few-shot classification}~\citep{vinyals2016matching}, where a learning task $\Tc_i$ is defined by a small i.i.d. sample of size $m$ (called the \emph{support set}~\citep{thrun2012learning}) that consists of $(x, y)$-pairs, $S_i := \{(x_j, y_j)\}_{j=1}^m \sim \Dc_i^m$, where $\Dc_i \sim \Pb$.
Formally, meta-learning can be formulated as a search problem over some family of algorithms $\Ac$:
\begin{align}
	\label{eq:transfer-risk}
    & \min_{A \in \Ac} \left\{ \Rc\left(A, \Pb\right) := \ep[\Dc \sim \Pb]{\ep[S \sim \Dc^m]{R(A(S), \Dc)}} \right\}, \\
    \label{eq:classical-risk}
    & \mathrm{where}\, R\left(f, D\right) := \ep[(x, y) \sim \Dc]{\ell\left(f(x), y\right)}.
\end{align}
The objective $\Rc(A, \Pb)$ given in \cref{eq:transfer-risk} is called the \emph{transfer risk}~\citep{baxter2000model} and is defined as the expected error encountered by models $f_i(\cdot) := A(S_i)$ produced by the learning algorithm $A$ on new tasks $\Tc_i$ sampled from $\Pb$.
Transfer risk characterizes how well an algorithm $A$ meta-generalizes over a task distribution $\Pb$.

The algorithms from family $\Ac$ are typically designed to be able to learn from the limited support data.
Meta-learning methods vary in how they define $\Ac$, which may consist of iterative optimization procedures \citep{ravi2016optimization, finn2017maml, nichol2018reptile}, approximate inference \citep{alshedivat2018continuous, grant2018recasting, garnelo2018conditional, finn2018probabilistic}, or nearest neighbor approaches~\citep{vinyals2016matching, snell2017prototypical}, among many other hybrid methods proposed in recent years~\citep[\eg,][]{rusu2018meta, chen2019closer, zintgraf2019fast}.
The methods also differ in terms of the objective functions they optimize to minimize the transfer risk \cref{eq:transfer-risk}, as the latter cannot be computed or optimized directly.

\textbf{Notation.}
Throughout the paper, we denote data samples with $S$ (or $Q$), learning algorithms with $A$, models that these algorithms produce after processing data samples with $f(\cdot)$ or $A(S)(\cdot)$; subscripts next to $A$ indicate the variables that parametrize algorithms.
Similarly, meta-samples (\ie, sets of data samples for multiple tasks) and meta-algorithms (\ie, procedures that return algorithms) are denoted with $\Sb$ and $\Ab$, respectively.
The number of training tasks is denoted by $n$ and the number of data points per task by $m$.

\subsection{Generalization in Meta-learning}
\label{sec:meta-generalization-error-bound}

The formulation of meta-learning given in \cref{eq:transfer-risk} was originally introduced by \citet{baxter2000model}, who derived the first bounds on the excess transfer risk (\ie, \emph{meta-generalization error bounds}).
For meta-algorithms $\Ab$ that produce learning algorithms $A$ by optimizing a loss $\Lc$, we can define meta-generalization as follows.
\vspace{-1ex}
\begin{definition}[Meta-generalization Error Bound]
    \label{def:meta-generalization-bound}
	Let $\Ab$ be a meta-algorithm which, given a meta-sample $\Sb := \{S_1, \dots, S_n\}$ from $n$ tasks, outputs an algorithm, $\Ab(\Sb) := \argmin_{A \in \Ac} \Lc(A, \Sb)$.
	A two-argument function $B(\delta, \Sb)$ is called a meta-generalization error bound for $\Ab$ if for any task distribution $\Pb$ and $\delta \in (0, 1]$, the following inequality holds with probability at least $1 - \delta$:
	\begin{equation}
		\label{eq:excess-tranfer-risk-bound}
		\Rc(\Ab(\Sb), \Pb) - \Lc(\Ab(\Sb), \Sb) \leq B(\delta, \Sb)
	\end{equation}
\end{definition}
\vspace{-1ex}
\noindent
\citet{maurer2005algorithmic} developed a general technique for obtaining such bounds $B(\delta, \Sb)$ using algorithmic stability~\citep{bousquet2002stability}.
In \cref{sec:analysis}, we will specialize Maurer's bounds to modern meta-learning algorithms.

\subsection{Modern Meta-learning Algorithms}
\label{sec:meta-learning-algorithms-background}

In this paper, we study three popular meta-learning methods that represent two broad categories: \emph{gradient-based} (\maml~\citep{finn2017maml} and \reptile~\citep{nichol2018reptile}) and \emph{metric-based} (\protonets~\citep{snell2017prototypical}).
We have selected these methods as many recent algorithmic developments in meta-learning, few-shot learning, and their applications are variations of those three~\citep[\eg,][]{mcmahan2016communication, duan2017one, li2017meta, rusu2018meta, zintgraf2019fast, chen2018federated}.
However, conclusions of our study are broadly applicable to the majority of modern meta-learning.

\textbf{Gradient-based meta-learning} defines the family of algorithms $\Ac$ as iterative optimization procedures: $A(\Sc) := \argmin_{\theta \in \Theta} L(f_\theta; S)$.
\maml and \reptile are gradient-based methods that approximately solve this minimization problem using an \emph{inner loop} of $T$ (stochastic) gradient steps with a learning rate $\alpha$ starting from a common initialization $\theta_0$ shared across tasks:
\begin{equation}
    \label{eq:gradient-based-adaptation}
    A_{\theta_0}(S) := f_{\theta_T},\, \mathrm{where}\, \theta_{t+1} := \theta_{t} - \alpha \nabla_{\theta_{t}} L(f_{\theta_{t}}; S).
\end{equation}
In this case, meta-learning amounts to search for an optimal initialization $\theta_0^\star$, which is done via the \emph{outer-loop} optimization of another objective function $\Lc(A_{\theta_0}; \Sb)$.
The key difference between \maml and \reptile is the loss $\Lc$ they optimize in the outer loop.
\reptile optimizes the empirical estimator of the transfer risk:%
\footnote{%
More precisely, \reptile updates $\theta_0$ iteratively: $\theta_0 \leftarrow \theta_0 + \varepsilon \frac{1}{n} \sum_{i=1}^n (\theta_i - \theta_0)$, where $\theta_i = A_{\theta_0}(S_i)$, $\varepsilon > 0$.
These updates approximately minimize $\Lc_\mathrm{emp}$ (see Appendix~\ref{app:reptile-objective-function}).}
\begin{align}
	\label{eq:empirical-objective}
	\Lc_\mathrm{emp}(A_{\theta_0}; \Sb) &:= \frac{1}{n} \sum_{i=1}^n \hat R(A_{\theta_0}, S_i), \\
	\hat R(A_{\theta_0}, S_i) &:= \frac{1}{|S_i|} \sum_{(x, y) \in S_i} \ell(A_{\theta_0}(S_i)(x), y),
\end{align}
while \maml holds out a sub-sample of $S$---called the \emph{query set}~\citep{finn2017maml}, which we denote $Q$---and optimizes an estimator of the transfer risk based on the held out set:
\begin{align}
	\label{eq:heldout-objective}
	\Lc_Q(A_{\theta_0}; \Sb) &:= \frac{1}{n} \sum_{i=1}^n \hat R_Q(A_{\theta_0}, S_i), \\
	\hat R_Q(A_{\theta_0}, S_i) &:= \frac{1}{|Q_i|} \sum_{(x, y) \in Q_i} \ell(A_{\theta_0}(S_i \setminus Q_i)(x), y).
\end{align}
As we will see, this difference in the meta-objectives will result in different meta-generalization bounds as well as different empirical behavior and practical implications.
We note that the most commonly used algorithm in federated learning~\citep[\textsc{FedAvg},][]{mcmahan2016communication} is identical to \reptile without fine-tuning at test time \citep{chen2018federated, khodak2019adaptive, jiang2019improving}.

\textbf{Metric-based meta-learning}
methods define the family of algorithms $\Ac$ that return non-parametric soft-nearest-neighbor models.
\protonets is one of such methods that computes \emph{prototype vectors} for each class in the inner loop and returns the following models:
\begin{align}
    \label{eq:proto-learning-algorithm}
    A_\theta(S)(x) &:= \argmax_{y \in \Yc} \frac{\exp(-d(\gv_\theta(x), \cv_y)}{\sum_{y^\prime \in \Yc} \exp(-d(\gv_\theta(x), \cv_{y^\prime})}, \\
    \cv_y &:= \frac{1}{|\Sc_y|} \sum_{(x, \cdot) \in \Sc_y} \gv_\theta(x), \quad \forall y \in \Yc
\end{align}
where the distance $d(\cdot, \cdot)$ is computed in the embedding space of $\gv_\theta(\cdot)$ and the so called \emph{class prototypes} $\cv_y$ are computed by averaging embeddings of the corresponding support points.
In the outer loop, \protonets optimize the same $\Lc_Q(A_\theta; \Sb)$ objective as \maml.

\clearpage
\section{Analysis}
\label{sec:analysis}

We adapt results from the stability theory of stochastic gradient methods~\citep{hardt2015train} and extend the classical bounds provided by \citet{maurer2005algorithmic} to \maml, \reptile, and \protonets.
We also make a few key observations about their expected behavior of these methods when the number of tasks and data points per task is limited, which is of practical importance to federated settings.
All proofs are provided in Appendix~\ref{app:proofs}.

\subsection{Understanding Meta-generalization via Algorithmic Stability}
\label{sec:stability-and-generalization}

As Definition~\ref{def:meta-generalization-bound} suggests, meta-generalization error is the discrepancy between the objective $\Lc(A; \Sb)$ optimized by a meta-learning method and the true transfer risk $\Rc(A, \Pb)$.
If the objective function is the empirical estimator $\Lc_\mathrm{emp}(A; \Sb)$, then following bound holds~\citep{maurer2005algorithmic}:
\vspace{-1ex}
\begin{equation}
	\label{eq:maurer-bound}
	\begin{aligned}
    	\MoveEqLeft \Rc(\Ab(\Sb), \Pb) - \Lc_\mathrm{emp}(\Ab(\Sb); \Sb) \\
    	&\leq 2\beta^\prime + (4n\beta^\prime + M)\sqrt{\frac{\ln(1/\delta)}{2n}} + 2\beta
	\end{aligned}
\end{equation}
with probability at least $1 - \delta$; $\beta^\prime$ and $\beta$ define stability\footnote{%
Intuitively, an algorithm (or meta-algorithm) is called stable if removing a single point from $S$ (or $\Sb$) would not affect its output by much. Precise definitions of algorithmic stability are provided in Appendix~\ref{app:proofs}.}
of the meta-learning algorithm $\Ab$ and of any learning algorithm $A$ it produces, respectively.
Generally, $\beta^\prime$ and $\beta$ are functions of the number of training tasks $n$ and data points per task $m$ and depend on the specific algorithms.
Maurer's bound becomes non-trivial when $\beta^\prime = o(1/n^a), a \geq 1/2$ and $\beta = o(1/m^b), b \geq 0$.

To derive bounds for modern meta-learning algorithms, we need two additional results.
First, for algorithms that optimize the Q-estimator $\Lc_Q(\Ab; \Sb)$ instead of the $\Lc_\mathrm{emp}(\Ab; \Sb)$ of the transfer risk, we need to bound on the difference $\Rc(\Ab(\Sb), \Pb) - \Lc_Q(\Ab(\Sb); \Sb)$.
We prove the following theorem that provides such a bound.
\vspace{-1ex}
\begin{theorem}
	\label{thm:meta-generalization-bound-rq}
	Let $\Ab$ be $\beta^\prime_Q$-uniformly stable with respect to $\hat R_Q$.
	Then, the following indequality holds for any task distribution $\Pb$ with probability at least $1 - \delta$:
	\vspace{-1ex}
	\begin{equation}
	    \label{eq:generalization-bound-loss-q}
	    \begin{aligned}
		\MoveEqLeft \Rc(\Ab(\Sb), \Pb) - \frac{1}{n}\sum_{i=1}^n \hat R_Q(\Ab(\Sb), S_i) \\
		&\leq 2 \beta_Q^\prime + (4n\beta_Q^\prime + M)\sqrt{\frac{\ln(1/\delta)}{2n}}
		\end{aligned}
	\end{equation}
	where $\hat R_Q(A, S_i) := \frac{1}{|Q_i|} \sum_{(x, y) \in Q_i} \ell(A(S_i \setminus Q_i)(x), y)$ with $\ell(\cdot, \cdot)$ bounded by $M$.
\end{theorem}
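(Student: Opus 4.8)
The plan is to mirror the stability argument of \citet{maurer2005algorithmic} that yields \cref{eq:maurer-bound}, but carried out for the held-out estimator $\hat R_Q$ rather than the in-sample estimator $\hat R$. I would define the single function of the meta-sample
\begin{equation*}
    \Phi(\Sb) := \Rc(\Ab(\Sb), \Pb) - \frac{1}{n}\sum_{i=1}^n \hat R_Q(\Ab(\Sb), S_i),
\end{equation*}
so that the claim is a high-probability upper bound on $\Phi(\Sb)$. I would obtain this in the usual two moves: first control the expectation $\ep[\Sb]{\Phi(\Sb)}$ using $\beta^\prime_Q$-stability, and then show that $\Phi$ has bounded differences in each of its $n$ arguments so that McDiarmid's inequality supplies the $\sqrt{\ln(1/\delta)/2n}$ deviation term.

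For the expectation, since the tasks are i.i.d.\ every summand $\ep[\Sb]{\hat R_Q(\Ab(\Sb), S_i)}$ equals $\ep[\Sb]{\hat R_Q(\Ab(\Sb), S_1)}$. I would introduce a ghost task $S_1^\prime$ and the perturbed meta-sample $\Sb^{(1)}$ in which $S_1$ is replaced by $S_1^\prime$, so that uniform stability with respect to $\hat R_Q$ gives $\lvert \ep[\Sb]{\hat R_Q(\Ab(\Sb), S_1)} - \ep[\Sb, S_1^\prime]{\hat R_Q(\Ab(\Sb^{(1)}), S_1)} \rvert \le \beta^\prime_Q$. The crucial point---and the reason the extra $2\beta$ term of \cref{eq:maurer-bound} disappears---is that in the second expectation $\Ab(\Sb^{(1)})$ no longer depends on $S_1$, so conditioning on $S_1 \setminus Q_1$ and using that $Q_1$ is an independent fresh draw from $\Dc_1$ makes $\hat R_Q(\Ab(\Sb^{(1)}), S_1)$ an \emph{unbiased} estimate of the per-task risk $R(\Ab(\Sb^{(1)})(S_1 \setminus Q_1), \Dc_1)$; averaging over $\Dc_1 \sim \Pb$ recovers exactly $\Rc(\Ab(\Sb^{(1)}), \Pb)$. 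Since $\Sb$ and $\Sb^{(1)}$ are identically distributed, this equals $\ep[\Sb]{\Rc(\Ab(\Sb), \Pb)}$, and the stability step yields $\ep[\Sb]{\Phi(\Sb)} \le 2\beta^\prime_Q$ (the factor two arising from the replace-one accounting).

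For the concentration step I would bound $\lvert \Phi(\Sb) - \Phi(\Sb^{(j)}) \rvert$ when a single task $S_j$ is replaced. The transfer-risk term changes by at most a multiple of $\beta^\prime_Q$ by stability; in the averaged sum the $n-1$ indices $i \ne j$ each change by at most $\beta^\prime_Q$ (same evaluation task, perturbed algorithm), while the single index $i = j$ changes by at most $M$ since the loss is bounded and the evaluation data itself is swapped, contributing $M/n$. Collecting terms gives a bounded-differences constant of order $4\beta^\prime_Q + M/n$, and McDiarmid's inequality then produces the deviation term $(4n\beta^\prime_Q + M)\sqrt{\ln(1/\delta)/2n}$; adding the $2\beta^\prime_Q$ expectation bound gives the stated inequality.

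The main obstacle is the expectation step, specifically making the unbiasedness rigorous while $S_i$ appears simultaneously inside the trained algorithm $\Ab(\Sb)$ and inside the evaluation set $Q_i$. The ghost-sample replacement decouples these two roles at the cost of a single $\beta^\prime_Q$, after which the held-out structure of $\hat R_Q$---that the algorithm is adapted on $S_i \setminus Q_i$ and scored on the disjoint $Q_i$---is exactly what lets me identify $\ep[\Sb]{\hat R_Q}$ with the population transfer risk rather than only up to an inner-stability penalty. Keeping track of the factor-of-two bookkeeping in the stability steps, and the support-size matching between $A(S)$ in $\Rc$ and $A(S_i \setminus Q_i)$ in $\hat R_Q$, is routine and follows \citet{maurer2005algorithmic}.
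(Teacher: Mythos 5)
Your proposal is correct and takes essentially the same route as the paper's proof: both arguments hinge on the observation that the held-out structure of $\hat R_Q$ (adaptation on $S_i \setminus Q_i$, evaluation on the disjoint $Q_i$) makes it an unbiased estimator of the per-task risk, which eliminates the inner-loop $2\beta$ term, leaving only the meta-level uniform-stability bound over the $n$ i.i.d.\ tasks. The sole cosmetic difference is that the paper invokes the classical result of Bousquet--Elisseeff (via Maurer's Theorem) as a black box for that meta-level term, whereas you unroll its proof explicitly via the ghost-task expectation bound and McDiarmid's inequality, with bookkeeping that matches the stated constants.
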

\vspace{-1ex}
Note that the bound in \cref{eq:generalization-bound-loss-q} lacks the term $2\beta$ which depends on the stability of the inner loop learning algorithm and is present in \cref{eq:maurer-bound}.

To be able to compare the generalization of \reptile, \maml, and \protonets, we need expressions for $\beta$, $\beta^\prime$, $\beta_Q^\prime$, as functions of the number of training tasks $n$ and the number of data points per task $m$.
Using results from stability theory of stochastic gradient method (SGM) due to \citet{hardt2015train} and bounds in \cref{eq:maurer-bound,eq:generalization-bound-loss-q}, we arrive at the following theorem.
\vspace{-1ex}
\begin{theorem}
	\label{thm:meta-generalization-bounds}
	Let the meta-algorithm $\Ab$ be an SGM that optimizes an $L^\prime$-Lipschitz and $\gamma^\prime$-smooth loss $\Lc(A, \Sb)$ by taking $T^\prime$ steps with non-increasing step sizes $\alpha^\prime_t \leq c^\prime / t$.
	With probability at least $1 - \delta$, we have the following:
	\begin{enumerate}[itemsep=-5pt,topsep=0pt,leftmargin=14pt]
		\item If $\Lc(A; \Sb)$ is Q-estimator of the transfer risk, then the following bound holds:\vspace{-1ex}
		\begin{equation}
			\label{eq:meta-generalization-bound-sgm-q}
			\Rc(A, \Pb) - \Lc(A; \Sb) \leq O\left(L^{\prime2}T^\prime\sqrt{\frac{\ln(1/\delta)}{n}}\right)
		\end{equation}
		\vspace{-2ex}
		\item If $\Lc(A; \Sb)$ is the empirical estimator of the transfer risk, the inner loop learning algorithm $A$ is an SGM that optimizes $L$-Lipschitz and $\gamma$-smooth loss $\ell(f(x), y)$ by taking $T$ steps with step sizes $\alpha_t \leq c / t$:\vspace{-1ex}
		\begin{equation}
		    \label{eq:meta-generalization-bound-sgm-emp}
    		\begin{aligned}
    			\MoveEqLeft \Rc(A, \Pb(\Tc)) - \Lc(A; \Sb) \\
    			&\leq O\left(L^{\prime 2}T^\prime\sqrt{\frac{\ln(1/\delta)}{n}} + L^2T\frac{1}{m}\right)
    		\end{aligned}
		\end{equation}
	\end{enumerate}
\end{theorem}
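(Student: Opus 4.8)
The plan is to derive both bounds by specializing two master inequalities already established in the excerpt---Maurer's bound \cref{eq:maurer-bound} for the empirical estimator, and \cref{eq:generalization-bound-loss-q} from \cref{thm:meta-generalization-bound-rq} for the Q-estimator---to the case where the learning procedures are stochastic gradient methods, using the stability analysis of SGM due to \citet{hardt2015train}. The single quantitative input I need is the following: an SGM that takes $T$ steps with step sizes $\alpha_t \leq c/t$ on an $L$-Lipschitz, $\gamma$-smooth loss over a sample of size $N$ is uniformly stable with parameter of order $O(L^2 T / N)$ (up to the logarithmic and $\gamma$-dependent factors absorbed into the constant implied by the step-size schedule). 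Applying this twice gives, for the outer loop run over a meta-sample of $n$ tasks, $\beta', \beta'_Q = O(L'^2 T'/n)$, and for the inner loop run over a support set of $m$ points, $\beta = O(L^2 T/m)$. Everything else is bookkeeping.

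For part (1), I substitute $\beta'_Q = O(L'^2 T'/n)$ into \cref{eq:generalization-bound-loss-q}. The first term contributes $2\beta'_Q = O(L'^2 T'/n)$, which is lower order. The crucial observation is that the prefactor $4n\beta'_Q$ of the $\sqrt{\ln(1/\delta)/(2n)}$ term has its $n$ cancelled by the $1/n$ in the stability parameter, leaving $4n\beta'_Q = O(L'^2 T')$. Since $\beta'_Q \to 0$, this dominates the constant $M$, so the second term is of order $O(L'^2 T' \sqrt{\ln(1/\delta)/n})$, which is exactly the rate claimed in \cref{eq:meta-generalization-bound-sgm-q}.

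For part (2), I substitute both $\beta' = O(L'^2 T'/n)$ and $\beta = O(L^2 T/m)$ into Maurer's bound \cref{eq:maurer-bound}, which has three terms. By the same cancellation as above, the first two terms $2\beta' + (4n\beta' + M)\sqrt{\ln(1/\delta)/(2n)}$ collapse to $O(L'^2 T' \sqrt{\ln(1/\delta)/n})$. The remaining term $2\beta$ is new to this estimator and contributes $O(L^2 T/m)$. Summing yields \cref{eq:meta-generalization-bound-sgm-emp}. Intuitively, the $1/\sqrt{n}$ term measures how stably the outer loop selects the initialization across tasks, while the additional $1/m$ term is the price of using the empirical (support-set) estimator rather than a held-out query set, since the inner-loop adaptation is only $O(1/m)$-stable.

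The main obstacle is establishing the two stability parameters rigorously, rather than the subsequent substitution. This requires verifying that the hypotheses of the \citet{hardt2015train} analysis actually hold for the composed meta-objective: in particular, that the outer-loop loss $\Lc(A, \Sb)$---which for \maml and \protonets is itself a function of the inner-loop output---is genuinely $L'$-Lipschitz and $\gamma'$-smooth in the meta-parameters, and that the prescribed schedule $\alpha'_t \leq c'/t$ yields stability decaying as $1/n$ in the number of tasks (respectively $1/m$ in the support points for the inner loop). Tracking how the inner-loop Lipschitz and smoothness constants propagate through differentiation of the inner optimization, and hence into $L'$ and $\gamma'$, is the delicate part; the theorem sidesteps this by positing $L'$ and $\gamma'$ as assumptions, after which the stability rates and the final bounds follow directly.
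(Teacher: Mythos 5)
Your proposal is correct and follows essentially the same route as the paper: the paper likewise combines Maurer's bound and the Q-estimator bound of Theorem~\ref{thm:meta-generalization-bound-rq} with the SGM stability lemma of \citet{hardt2015train} (applied to the outer loop over $n$ tasks and, for \reptile, the inner loop over $m$ points), then performs exactly the substitution and $n$-cancellation bookkeeping you describe, absorbing the $\gamma$- and $c$-dependent exponents into the $O(\cdot)$ constants. Your closing remark about the composed meta-objective's Lipschitz/smoothness constants being posited rather than derived also matches how the paper handles (and sidesteps) that issue.
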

\vspace{-1ex}
The bound in \cref{eq:meta-generalization-bound-sgm-q} is applicable to \protonets and \maml; the one given in \cref{eq:meta-generalization-bound-sgm-emp} applies to \reptile.

\textbf{Observations.}
We can make the following observations by comparing expression in \cref{eq:meta-generalization-bound-sgm-q} and \cref{eq:meta-generalization-bound-sgm-emp}:
\begin{enumerate}[itemsep=0pt,topsep=-2pt,leftmargin=24pt]
    \item[\textbf{O1}.] When $n \rightarrow \infty$, the generalization error of any meta-learning method to which bound in \cref{eq:meta-generalization-bound-sgm-q} applies (\eg, \maml, \protonets) goes to 0.
    \item[\textbf{O2}.] The bound for \reptile has an additive term $O(L^2T/m)$ compared to \maml or \protonets.
    This implies that while we can reduce the generalization gap for \maml/\protonets by training on more tasks, \reptile always has a non-zero gap due to within-task sample complexity.
    \item[\textbf{O3}.] The bound in \cref{eq:meta-generalization-bound-sgm-q} may seem to be independent of the support set size $m$.
    This is unlikely,
    as the Lipschitz and smoothness constants of the $\Lc(A, \Sb)$ objective must depend on the properties of the support sets in $\Sb$, with larger sets more likely to results in better-behaved meta-objective.\footnote{%
    In practice, we observe that training \protonets and \maml on the same number of tasks but with larger support sets leads to better meta-test performance consistently, suggesting that larger support sets are generally better.}
    Our analysis suggests that the dependence of \cref{eq:meta-generalization-bound-sgm-q} on $m$ and $n$ is multiplicative rather than additive in \cref{eq:meta-generalization-bound-sgm-emp}, which means that a large enough $n$ can perhaps compensate for a small $m$ in \cref{eq:meta-generalization-bound-sgm-q}.
\end{enumerate}

\subsection{Implications for Meta-learning in the Limited Supervision Regime}
\label{sec:bounded-supervision}

The observations we have made in the previous section have important practical implications.

\vspace{-2ex}
\paragraph{Improving evaluation.}
To measure data-efficiency of meta-learning methods, as \textbf{O1} suggests, we should control for the number of tasks at meta-training time, since otherwise the observed differences in performance will \emph{not} be indicative of generalization.
However, all popular few-shot classification benchmarks based on \omniglot, \miniimagenet, and other datasets~\citep{triantafillou2019metadataset} train on tasks generated programmatically by randomly sampling support sets from a large underlying data pool.
Such a construction provides access to combinatorially many training tasks, virtually setting $n \rightarrow \infty$.
Instead of training on an endless stream of tasks, we propose an alternative evaluation scheme, where we strictly limit the number of unique tasks available at training time.
Not only our evaluation scheme corresponds to the standard FL setting, where we have a limited number of users for meta-training, but also is compatible with the popular few-shot learning benchmarking datasets.

\vspace{-2ex}
\paragraph{Understanding tradeoffs.}
In federated settings, acquisition of supervised data has an associated cost: in some cases, the number of users might be large, but tasks may require manual data labeling (\eg, prompt users about their preferences); in other cases, manual data labeling may not be necessary or expensive, but the number of unique users might be limited.
To select the best meta-learning method for a given problem, we need understand the tradeoffs.
\textbf{O2} suggests that using \reptile might be suboptimal if tasks have very small support sets; at the same time, \maml and \protonets are likely to work better when trained on more tasks with fewer labels allocated to each task.

\vspace{-2ex}
\paragraph{Optimally allocating the labeling budget.}
Even when the labeling budget, the number of tasks, and the support sets are all fixed, we often have additional flexibility in terms of which support points to label in each task.
The standard approach is to select these points uniformly at random.
Based on \textbf{O3}, we hypothesize that carefully selected support sets may lead to \emph{better-behaved} meta-losses and improve meta-generalization.

\begin{figure}[t]
\vspace{-2ex}
\begin{minipage}[t]{\columnwidth}
\begin{algorithm}[H]
    \small
    \caption{Active Meta-learning}
    \label{alg:active-meta-learning}
    \begin{algorithmic}[1]
    \INPUT $\Pb$: task distribution, $B$: labeling budget, $L$: \# labels per task, $A_\theta(\cdot)$: learning algorithm, $\texttt{select\_labeled}(\cdot)$: active labeling, $\texttt{meta\_update}(\cdot)$: meta-learning update.
    \REPEAT
    \STATE Initialize: $\theta \leftarrow \theta_0$, $D \leftarrow \varnothing$.
        \IF{$B > 0$ and time to get new tasks}
            \STATE Sample a new unlabeled set: $S_u \sim \Pb$.
            \STATE Initialize labeled support set: $S_l \leftarrow \varnothing$.
            \FOR{l in $1 \dots L$}
                \STATE $S_l \leftarrow S_l \cup \texttt{select\_labeled}(D, A_\theta(S_l))$.
            \ENDFOR
            \STATE Update training tasks: $D \leftarrow D \cup \{S_L\}$.
            \STATE Reduce available budget: $B \leftarrow B - L$.
        \ENDIF
        \STATE Sample a batch of tasks: $\{S_i\} \sim D$.
        \STATE Meta-learn: $\theta \leftarrow \texttt{meta\_update}(A_\theta, \{S_i\})$.
    \UNTIL{Convergence}
    \OUTPUT Meta-learned algorithm: $A_{\theta^\star}$.
    \end{algorithmic}
\end{algorithm}
\vspace{-6ex}
\begin{algorithm}[H]
    \small
    \caption{Active Label Selection}
    \label{alg:active-sampling}
    \begin{algorithmic}[1]
    \INPUT $S_u$: unlabeled set, $\theta_0$: initial parameters,\\$\gv_\theta(\cdot)$: embedding function, $f_\theta(\cdot)$: predictive model, $A_\theta(\cdot)$: learning algorithm, $L$: \# labels to sample.
    \STATE Initialize: $\theta \leftarrow \theta_0$.
    \STATE Compute representations: $\rv_i \leftarrow \gv_\theta(x_i),  x_i \in S_u$
    \STATE Compute predictive entropy: $h_{i} \leftarrow \Hc(f_\theta(x_i)), x_i \in S_u$
    \STATE Get clusters: $C \leftarrow \texttt{k-means++}(\{\hv_i\})$.
    \STATE Initialize the labeled set: $S_l \leftarrow \varnothing$.
    \FORALL {$c_j \in C$}
    \STATE Sample $L/|C|$ points from cluster $c_j$:\\
    $\{i_1, \dots, i_{L/|C|}\} \sim \text{Categorical}\left(\{h_i\}_{i \in c_j}\right)$
    \STATE Request labels: $S_l \leftarrow S_l \cup \{(x_{i_k}, y_{i_k})\}_{k=1}^l$.
    \STATE Update models: $\gv_\theta, f_\theta \leftarrow A_\theta(S_l)$.
    \ENDFOR
    \OUTPUT Labeled set: $\{(x_{i_1}, y_{i_1}), \dots, (x_{i_L}, y_{i_L})\}$
    \end{algorithmic}
\end{algorithm}
\end{minipage}
\vspace{-2ex}
\end{figure}

\section{Active Meta-learning Algorithms}
\label{sec:methods}

We conjecture that actively selecting labeled support points can potentially improve performance of meta-learning methods in the limited supervision regime.
Assuming that we are given a hard labeling budget $B$ and a set of training tasks with \emph{unlabeled} support sets, we propose an algorithm that allocates this budget among the tasks ($m$ points per task) during meta-training by adaptively selecting which points to label.

\paragraph{Active meta-learning.}
We propose \cref{alg:active-meta-learning} for \emph{active meta-learning}, which gradually acquires labels for selected support points at meta-training time and proceeds in 3 steps:
1) start with a fully unlabeled support set (lines 4-5),
2) run an active label selection sub-routine that selects and labels a few points from the support set (line 6-8),
3) add the task with the labeled support set to the growing collection of training tasks (lines 9-10).
Importantly, the active sampling procedure is integrated into the inner loop, where our approach interleaves active labeling with model adaptation.
Put differently, instead of selecting support points that are labeled all at once, we sample them in mini-batches and re-adapt the model on already sampled points before requesting the next batch (Algorithm~\ref{alg:active-sampling}, lines 6-10).

\vspace{-2ex}
\paragraph{Active labeling.}
\cref{alg:active-meta-learning} relies on active labeling as a subroutine.
We designed a very simple active labeling method (Algorithm~\ref{alg:active-sampling}), which uses model uncertainty and data diversity as the selection criteria, which are most common in the active learning literature~\citep{settles2009active}.
Given an unlabeled support set for a new task, first, we compute representations for each point by extracting them form a hidden layer of the current model and clusters them using \texttt{k-means++} \citep{arthur2006k}.
Next, we compute predictive probabilities of the current model for each unlabeled point and select points that will be labeled proportionally to the model's uncertainty with stratification by cluster.
We use the entropy of the model's predictive distribution as the measure of model's uncertainty.
Our approach is simple, works well, and is essentially a combination of uncertainty- and diversity-based active label acquisition~\citep{huang2010active, zhdanov2019diverse}.

\section{Experiments}
\label{sec:experiments}

Our experimental analysis is organized into two parts.
In the first part, we validate the predictions of our theory by analyzing behavior of \maml, \reptile, and \protonets under different supervision tradeoffs on the standard few-shot learning datasets.
In the second part, we benchmark these methods on few-shot and federate learning datasets with a fixed total labeling budget and random versus active data labeling.

\begin{figure}[t]
    \centering
    \includegraphics[width=\columnwidth]{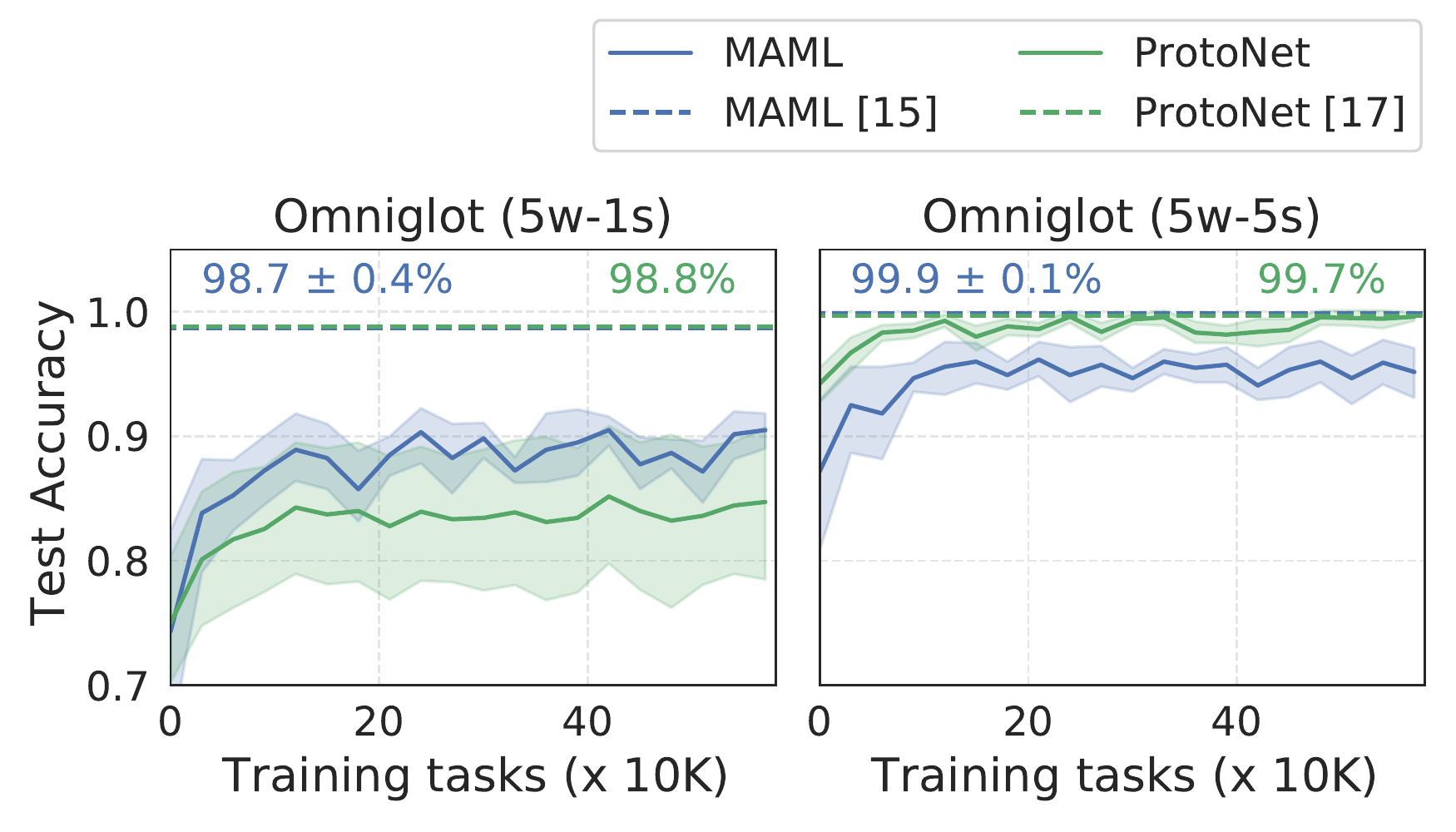}
    \caption{%
    Test accuracy of \maml and \protonets trained under different constraints on the number of training tasks.
    The difference between the method is only visible when we control for the number of training tasks.
    Shaded regions are 95\% CI (based on 3 runs with different seeds).}
    \label{fig:results-maml-vs-proto}
    \vspace{-2ex}
\end{figure}

\subsection{The Setup}
\label{sec:exp-setup}

\paragraph{Datasets.}
We conduct our study on \omniglot \citep{lake2015human} and \miniimagenet \citep{vinyals2016matching} with the standard data splits into train, validation, and test, and federated EMNIST dataset~\citep{caldas2018leaf}.
We consider 5-way and 20-way classification tasks with 1-5 support shots and 1 query shot for \omniglot and \miniimagenet.
For EMNIST, each task is a 62-way classification with the data corresponding to a unique user (all 3400 users were split into train/validation/test sets as 3000/200/200), where we similarly limit the support data to 1-5 shots and use the full tests sets of each user as query sets.
Note that our personalized federated learning setup slightly differs from the standard FL benchmarks where the amount of support data per user is not restricted to 1-5 points.
No data augmentation is used in any of our settings.

\paragraph{Models and methods.}
We consider 3 meta-learning methods: \maml, \reptile, and \protonets, using the standard hyperparameters and small convolutional backbone networks~\citep{finn2017maml, nichol2018reptile, snell2017prototypical}.

\vspace{-2ex}
\paragraph{Labeling budgets and strategies.}
In \omniglot and \miniimagenet, each $k$-way task is constructed by first selecting $k$ handwritten characters as classes, then selecting a 1-shot query set and a small support set from the corresponding data.
In EMNIST, the classes are fixed for all tasks, each task corresponds to a user, the support sets are selected from the users' training data and test data is used as query sets.
The labeled points in the support sets are either selected uniformly at \emph{random} or \emph{actively} using \cref{alg:active-sampling}.

\vspace{-2ex}
\paragraph{Evaluation.}
All methods are trained either (a) in the \emph{classical regime}, where we do not control for the the number of training tasks or the total amount of labeled data, or (b) in the \emph{limited supervision regime}, where the total amount training labeled data is limited.
We report performance in terms of accuracy in each setting, denoted \texttt{[dataset] (Xw-Ys) (@ Z)}, where \texttt{X} is the number of ways, \texttt{Y} is the number of shots, and \texttt{Z} is the labeling budget.\footnote{The \texttt{Z} is not specified for the classical training regime.}
For each labeling budget, we report the accuracy on the test tasks for the method with hyperparameters selected based on the accuracy on the validation tasks.
Each of our experiments was repeated 3 times with different random seeds.

\vspace{-2ex}
\paragraph{Reproducibility.}
To support reproducibility, we have developed a software framework that allows users benchmark arbitrary meta-learning methods under different supervision tradeoffs.
The code and configurations for all our experiments will be released.

All additional details on the setup are in \cref{app:exp-details}.

\begin{figure*}[t]
    \centering
    \includegraphics[width=\textwidth]{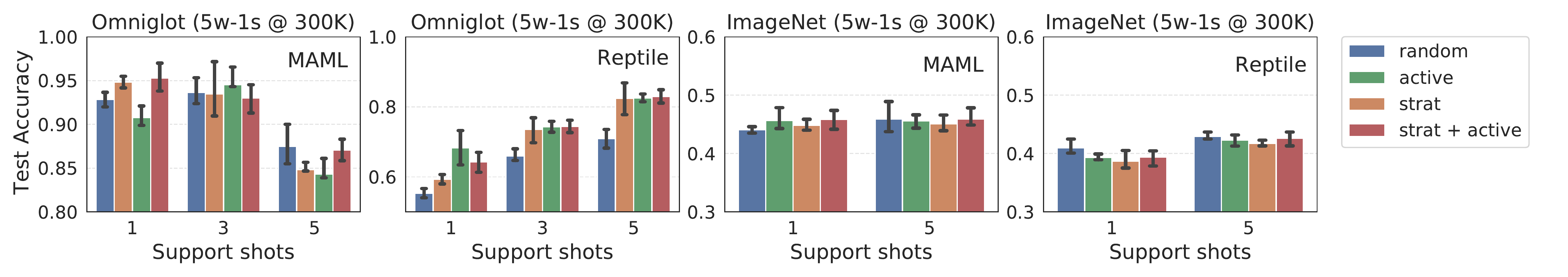}
    \caption{%
    Test accuracy of \maml and \reptile trained on \omniglot and \miniimagenet under a limit on the available supervision (the total labeling budget was fixed at 300K) as a function of the number of support shots.
    \maml and \reptile exhibit visibly different behaviors.
    Error bars indicate 95\% CI (based on 3 runs with different random seeds).}
    \label{fig:results-all-bounded}
\end{figure*}
\begin{figure*}[t]
    \centering
    \includegraphics[width=\textwidth]{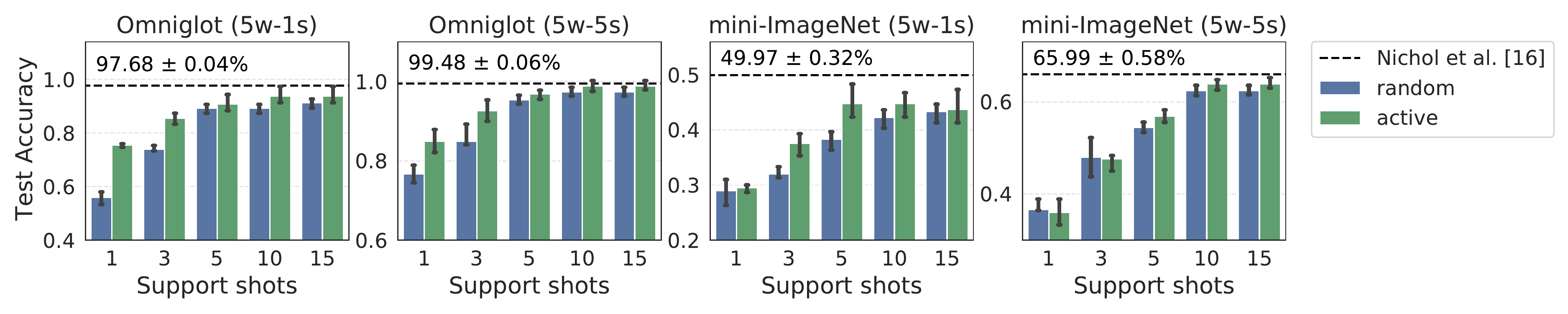}
    \caption{%
    Test accuracy of \reptile meta-trained for 100K steps without controlling for the number of training tasks (\ie, classical evaluation regime) as a function of the support set size of the training tasks.
    The dashed line corresponds to test accuracy of \reptile reported by \citet{nichol2018reptile} (trained in the same regime with 10 support shots and data stratification by class ).
    Error bars indicate 95\% CI (based on 3 runs with different random seeds).}
    \label{fig:results-reptile}
\end{figure*}

\subsection{Understanding Supervision Tradeoffs}
\label{sec:exp-understanding}

To validate the implications of our theory (\textbf{O1-3}), we conduct the following study.
First, we analyze the differences in behavior of meta-learning methods when the number of training tasks is limited.
Next, we analyze how the tradeoff between the number of tasks and the number of data points per task affect performance of different methods.
Finally, we discuss the effect of active data labeling on meta-learning.

\vspace{-2ex}
\paragraph{Classical evaluation vs. limited supervision.}
We ask whether different meta-learning methods behave differently when trained and evaluated in the classical vs. limited supervision regime.
To answer this question, we trained \maml and \protonets on 5-way, 1-shot and 5-shot \omniglot benchmark under different limits on the number of training tasks (ranging between 1-500K).
\cref{fig:results-maml-vs-proto} shows that the performance of \maml and \protonets is virtually indistinguishable in the classical evaluation regime.\footnote{When the number of training tasks is not limited, 10M+ unique tasks are typically generated during training.}
However, once the number of training tasks is limited, we observe quite a distinct behavior---\maml works better when trained on 1-shot tasks, while \protonets dominate the 5-shot benchmark.
Moreover, as the number of tasks increases ($n \rightarrow \infty$ in the limit), the generalization error reduces at a similar rate for both methods and the gap between them shrinks, as suggested by our theory.

\textbf{Exploring tradeoffs under limited supervision.}
In our next set of experiments, given a fixed labeling budget, we would like to find out how different ways of allocating supervision across tasks affects performance.
To this end, we fixed the labeling budget at 300K, and constructed meta-datasets based on \omniglot and \miniimagenet that satisfied the limit on the labeling budget.
Each meta-dataset consisted of 5-way tasks with 1-shot, 3-shot, or 5-shot support sets that were selected using different labeling strategies.
Due to fixed labeling budget, settings with larger support sets had fewer training tasks.
In addition to selecting labels uniformly at random and actively, we also conducted experiments with selection stratified by class.\footnote{
Stratification ensured that the selected support sets where class-balanced.
Although typical in meta-learning, such an approach requires knowing all labels in the support set \emph{a priori}, narrowing the scope of possible applications.}
The results for \maml and \reptile are presented in \cref{fig:results-all-bounded}.

First, we notices that \maml is able to attain much better performance when trained on more tasks with fewer labels per task, suggesting that the number of tasks is indeed the dominant factor that determines how well the method generalizes.
Interestingly, \reptile exhibits quite the opposite behavior and yields strictly better performance when trained on overall fewer tasks with more data points each.
This is consistent with the meta-generalization error bounds given in \cref{thm:meta-generalization-bounds} which has an additional $O(1/m)$ term for \reptile that turns out to be dominant in this particular case.
    
Second, active selection of labeled data is beneficial for both methods, although \reptile benefits from it more.
Theoretically, we hypothesise (and conjecture) that the effect might be due to improved constants in each terms of the meta-generalization bounds (\ie, due to ``better-behaved'' inner- and outer-loop objectives, when the labeled points are selected to minimize the uncertainty of the adapted model).
Note that the effects of active label selection and more data per task on \reptile are also visible in the classical evaluation regime (see \cref{fig:results-reptile}, where the number of training tasks was not controlled).

Taken together, our observations suggest that there are multiple factors that significantly affect performance of meta-learning in \emph{different ways}, when the availability of training data is limited.
While some of the effects of limited supervision can be reasonably explained by the stability theory, from a practical point of view, having benchmarks that can capture such effects is essential.

\subsection{Benchmarking @ Fixed Labeling Budgets}
\label{sec:exp-benchmarking}

\begin{table}[t]
    \centering
    \caption{%
    Results on the suite of bounded supervision benchmarks for \reptile, \maml, and \protonets trained using random (R) or active (A) selection of the labeled support.
    Each row in the table corresponds to a benchmark.}
    \label{tab:results}
    \vspace{-1ex}
    \renewcommand{\arraystretch}{1.2}
    \fontsize{8pt}{8pt}\selectfont
    \begin{tabu} to \linewidth {X[0.2]X[2.1]X[1.9,r]|X[r]X[r]|X[r]X[r]|X[r]X[r]}
        \toprule
        \multicolumn{3}{c|}{\textbf{Benchmark}} &
        \multicolumn{2}{c|}{\textbf{\reptile}} &
        \multicolumn{2}{c|}{\textbf{\maml}} &
        \multicolumn{2}{c}{\textbf{\proto}} \\
        \multicolumn{2}{l}{Dataset} & Budget
        & R & A
        & R & A
        & R & A \\
        \midrule
        \multirow{8}{*}{\rotatebox{90}{\omniglot}}
            & 5w-1s & 30K
            & 73.3 & 77.2                       %
            & 83.3 & 79.6                       %
            & \textbf{84.8} & 84.5  \\          %
            &       & 300K
            & 76.8 & 80.1                       %
            & 91.8 & 93.6                       %
            & 92.2 & \textbf{94.3}   \\         %
            \cmidrule{3-9}
            & 5w-5s & 30K
            & 89.3 & 88.0                       %
            & 91.5 & 90.2                       %
            & 95.1 & \textbf{96.8}   \\         %
            &       & 300K
            & 92.4 & 93.0                       %
            & 96.3 & 96.9                       %
            & 97.0 & \textbf{98.1}   \\         %
        \cmidrule{2-9}
            & 20w-1s & 60K
            & 68.4 & 67.9                       %
            & 79.0 & 79.8                       %
            & 84.9 & \textbf{85.4}   \\         %
            &        & 600K
            & 76.3 & 77.8                       %
            & 84.8 & 83.9                       %
            & 86.1 & \textbf{87.9}   \\         %
            \cmidrule{3-9}
            & 20w-5s & 60K
            & 89.8 & 92.4                       %
            & 93.3 & 94.1                       %
            & 95.2 & \textbf{96.0}   \\         %
            &        & 600K
            & 92.0 & 94.1                       %
            & 95.5 & 95.9                       %
            & \textbf{96.8} & \textbf{96.8} \\  %
        \midrule
        \multirow{4}{*}{\rotatebox{90}{\imagenet}}
            & 5w-1s & 30K
            & 38.5 & 39.4                       %
            & \textbf{44.2} & 44.0              %
            & 42.9 & 40.7            \\         %
            &       & 300K
            & 42.0 & 41.9                       %
            & \textbf{45.4} & 45.2              %
            & 39.9 & 41.0            \\         %
            \cmidrule{3-9}
            & 5w-5s & 30K
            & 55.4 & 55.2                       %
            & 56.8 & \textbf{57.1}              %
            & 54.0 & 53.4            \\         %
            &       & 300K
            & 56.7 & 59.4                       %
            & 60.8 & \textbf{61.0}              %
            & 57.4 & 55.8            \\         %
        \midrule
        \multirow{6}{*}{\rotatebox{90}{EMNIST}}
            & 62w-1s & 10K
            & 69.6 & 67.9                       %
            & 73.9 & 70.7                       %
            & \textbf{74.0} & 73.3   \\         %
            &        & 50K
            & 77.5 & 75.9                       %
            & \textbf{80.3} & 79.1              %
            & 78.2 & 78.0            \\         %
            &        & 100K
            & 82.0 & 84.5                       %
            & 86.7 & 87.7                       %
            & 88.8 & \textbf{90.1}   \\         %
            \cmidrule{3-9}
            & 62w-5s & 10K
            & 78.1 & \textbf{80.1}              %
            & 68.3 & 75.0                       %
            & 71.2 & 72.9            \\         %
            &        & 50K
            & 84.3 & \textbf{85.8}              %
            & 76.0 & 79.4                       %
            & 77.1 & 76.8            \\         %
            &        & 100K
            & 87.1 & \textbf{89.0}              %
            & 82.5 & 86.3                       %
            & 86.1 & 88.5            \\         %
            \bottomrule
    \end{tabu}
\end{table}

\begin{table}[t]
    \centering
    \caption{%
    \fedavg vs. \reptile on \emnist with random or active labeling.
    The labeling budget was fixed to 100K.
    Error bars indicate 95\% CI (3 runs, different random seeds).}
    \label{tab:fedavg-vs-reptile}
    \vspace{-1ex}
    \renewcommand{\arraystretch}{1.2}
    \fontsize{8pt}{8pt}\selectfont
    \begin{tabu} to \linewidth {X|X[r]|X[r]|X[r]|X[r]}
        \toprule
                            & \multicolumn{2}{c|}{\textbf{\fedavg}} & \multicolumn{2}{c}{\textbf{\reptile}}         \\
        \textbf{\# shots}   & Random            & Active            & Random            & Active                    \\
        \midrule
        1                   & $77.1 \pm 2.0$    & $77.8 \pm 2.1$    & $82.2 \pm 1.6$    & $\mathbf{84.2} \pm 1.9$   \\
        5                   & $80.7 \pm 2.4$    & $79.6 \pm 1.8$    & $87.0 \pm 2.1$    & $\mathbf{89.1} \pm 1.5$   \\
        \bottomrule
    \end{tabu}
    \vspace{-1ex}
\end{table}

There are many tradeoffs that we need to consider and balance when selecting a meta-learning algorithm for a setting with limited labeling budget.
To enable fair and reproducible comparison of meta-learning methods in different regimes, we introduce a suite of new benchmarks that test performance at different fixed labeling budgets.
Our benchmarks are compatible with both few-shot learning and federated learning datasets.

\cref{tab:results} presents results for each pair of meta-learning method and labeling strategy (random vs. active).
For each pair, we selected the best performance across settings with 1, 3, and 5 support shots at training time, and report results on both 1-shot and 5-shot test tasks.

\vspace{-2ex}
\paragraph{\omniglot and \miniimagenet.}
Comparing random versus active data labeling, we observe that active selection almost always consistently improves performance.
Overall, \protonets dominate other methods on \omniglot; \maml does better on \miniimagenet.
\reptile performs significantly worse than other methods when a strict limit on the labeled data is enforced.

\paragraph{Federated EMNIST.}
We observe overall similar trends for all methods evaluated in our personalized federated learning setting.
Interestingly, however, \reptile (with active data selection) tends to dominate other methods on 5-shot version of this benchmark.
This is likely due to the fact that the query sets in EMNIST larger than for \omniglot and \miniimagenet, which benefits \reptile since it does not distinguish between support and query sets and uses all available labeled data for inner loop updates.

For completeness purposes, we also compare \reptile and \fedavg~\citep{mcmahan2016communication} (the most popular FL algorithm used in many practical settings) and present results in \cref{tab:fedavg-vs-reptile}.
These algorithms differ only in whether or not they fine-tune the model on the support data at test time.
\reptile uses fine-tuning and clearly outperforms \fedavg on our personalized FL benchmarks.

\section{Conclusion}
\label{sec:conclusion}

Motivated by use-cases of meta-learning for personalization in federated learning, we have analyzed theoretically and experimentally the data efficiency of two major families of modern meta-learning algorithms.

Using stability theory, we derived bounds on the transfer risk (or meta-generalization error).
Our bounds revealed that:
1) \reptile and other methods designed to meta-learn by optimizing the empirical estimator of the transfer risk do not work well unless each training task contains a sufficient number of labeled points;
2) \protonets, \maml, and other methods designed to meta-learn by optimizing held out set losses can effectively learn from data-scarce tasks but require a large number of such tasks to meta-generalize.

Further, through multiple experiments, we confirmed predictions of our theory as well as studied behavior of popular meta-learning algorithms under different supervision tradeoffs that have important practical implications.
To that end, we introduced a new approach to benchmarking meta-learning methods in the limited supervision regime, which is compatible with arbitrary few-shot and federated learning datasets.

Finally, we conjectured that selecting labeled support sets at meta-training actively can improve performance of meta-learning methods in the limited supervision regime.
To test that hypothesis, we proposed active meta-learning---a simple approach that incorporated active labeling into the inner-loop.
Our method turned out to be quite effective, leading to improved performance of multiple methods under limited supervision.

We hope that this study will further accelerate research and enable wider adoption of meta-learning in personalized federated learning and other practical settings.

\subsection*{Acknowledgments}
The authors thank Mikhail Khodak, Harri Edwards, Afshin Rostamizadeh, Jenny Gillenwater for helpful comments on early versions of this manuscript.
This work was supported in part by DARPA FA875017C0141, the National Science Foundation grants IIS1705121 and IIS1838017, an Amazon Web Services Award, a JP Morgan A.I. Research Faculty Award, a Carnegie Bosch Institute Research Award, a Facebook Faculty Research Award, and a Block Center Grant.
MA was supported by Google PhD Fellowship.
LL contributed to this work while at CMU.
Any opinions, findings and conclusions or recommendations expressed in this material are those of the author(s) and do not necessarily reflect the views of DARPA, the National Science Foundation, or any other funding agency.

\bibliography{references}
\bibliographystyle{unsrtnat}

\clearpage
\onecolumn
\appendix
\section{Reptile Optimizes $\Lc_\text{emp}(A; \Sb)$}
\label{app:reptile-objective-function}

The Reptile meta-learning algorithm~\citep{nichol2018reptile} is defined as follows.
Given a model $f_\theta$ parametrized by $\theta$, it defines the inner loop algorithm $A_{\theta_0}$ as a $T$-step stochastic gradient optimization:
\begin{equation}
    \label{eq:reptile-inner-loop}
    A_{\theta_0}(S_i) := \theta_T^i, \quad \text{where} \quad \theta_{t+1}^i := \theta_t^i - \alpha_t \nabla_{\theta_t^i} \left( \frac{1}{|S_i|} \sum_{(x, y) \in S_i} \ell(f_{\theta_t}(x), y) \right)
\end{equation}
In the outer loop, it updates $\theta_0$, which is shared across all tasks, as follows:
\begin{equation}
    \label{eq:reptile-outer-loop}
    \theta_0 \leftarrow \theta_0 - \varepsilon \frac{1}{n} \sum_{i=1}^n (\theta_0 - A_{\theta_0}(S_i)), \varepsilon > 0
\end{equation}
We argue that Reptile outer loop updates \eqref{eq:reptile-outer-loop} approximate gradient descent on the empirical estimator of the transfer risk:
\begin{equation}
    \label{eq:empirical-transfer-risk}
    \Lc_\mathrm{emp}(A_{\theta_0}; \Sb) := \frac{1}{n} \sum_{i=1}^n \hat R(A_{\theta_0}, S_i), \quad \hat R(A_{\theta_0}, S_i) := \frac{1}{|S_i|} \sum_{(x, y) \in S_i} \ell(A_{\theta_0}(S_i)(x), y)
\end{equation}
To understand why this is the case, first, consider the gradient of $\Lc_\mathrm{emp}(A_{\theta_0}; \Sb)$ with respect to $\theta_0$:
\begin{align}
    \label{eq:empirical-transfer-risk-gradient}
    \nabla_{\theta_0} \Lc_\mathrm{emp}(A_{\theta_0}; \Sb)
    &= \frac{1}{n} \sum_{i=1}^n \nabla_{\theta_0} \hat R(A_{\theta_0}, S_i) \\
    &= \frac{1}{n} \sum_{i=1}^n \frac{1}{|S_i|} \sum_{(x, y) \in S_i} \nabla_{\theta_0} \ell(A_{\theta_0}(S_i)(x), y) \\
    &= \frac{1}{n} \sum_{i=1}^n \frac{1}{|S_i|} \sum_{(x, y) \in S_i} \nabla_{\theta_T^i} \ell(f_{\theta_T^i}(x), y) [\nabla_{\theta_0} A_{\theta_0}(S_i)]
\end{align}
Now, we can compute the difference between $\nabla_{\theta_0} \hat R(A_{\theta_0}, S_i)$ and the Reptile update $A_{\theta_0}(S_i) - \theta_0$:
\begin{equation}
    \label{eq:empirical-transfer-risk-gradient-reptile-diff}
    (\theta_0 - A_{\theta_0}(S_i)) - \nabla_{\theta_0} \hat R(A_{\theta_0}, S_i) =\, \frac{1}{|S_i|} \sum_{(x, y) \in S_i} \left[ \left( \sum_{t=1}^T \alpha_t \nabla_{\theta_t^i} \ell(f_{\theta_t^i}(x), y) \right) - \nabla_{\theta_T^i} \ell(f_{\theta_T^i}(x), y) [\nabla_{\theta_0} A_{\theta_0}(S_i)] \right]
\end{equation}
Expression in the square brackets is the difference between $T$ inner loop gradient steps on $\ell(f_\theta(x), y)$ and the gradient at the final $T$-th step transformed by the Jacobian $\nabla_{\theta_0} A_{\theta_0}(S_i)$.
This expression was analyzed by \citet{nichol2018reptile} using perturbation theory and Taylor approximation, where it was shown that this difference is equal to the following:
\begin{equation}
    \label{eq:empirical-transfer-risk-gradient-reptile-diff-i}
    \left( \sum_{t=1}^T \alpha_t \nabla_{\theta_t^i} \ell(f_{\theta_t^i}(x), y) \right) - \nabla_{\theta_T^i} \ell(f_{\theta_T^i}(x), y) [\nabla_{\theta_0} A_{\theta_0}(S_i)] = (I - \alpha H_{\theta_0}^T) \sum_{t=1}^{T-1} \nabla_{\theta_t^i} \ell(f_{\theta_t^i}(x), y) + O(\alpha^2)
\end{equation}
where $H_{\theta_0}^T$ is the Hessian of $\ell(f_{\theta_T}(x), y)$ at $\theta_0$, $\alpha := \max_{t \in [1, T]} \alpha_t$.

Assuming that $\alpha$ (\ie, the inner loop step size) is sufficiently small and the norm of $\nabla_\theta \ell(f_\theta(x), y)$ is bounded by some constant $G$, the difference in \eqref{eq:empirical-transfer-risk-gradient-reptile-diff-i} is bounded by $(1 - \alpha \lambda_\mathrm{max}(H_{\theta_0}^T)) G(T - 1) + O(\alpha^2)$, which implies:
\begin{equation}
    (\theta_0 - A_{\theta_0}(S_i)) - \nabla_{\theta_0} \hat R(A_{\theta_0}, S_i) \leq (1 - \alpha \lambda_\mathrm{max}(H_{\theta_0}^T)) G(T - 1) + O(\alpha^2)
\end{equation}
The deviation between $\nabla_{\theta_0} \Lc_\mathrm{emp}$ and Reptile updates would be small when the inner loop objective is well behaved (has a small $G$), the number of inner loops steps $T$ is not too large and the step sizes $\alpha$ are small, which would ensure convergence of Reptile to a stationary point of $\Lc_\mathrm{emp}$.
We leave sharper analysis of the convergence rates in the case of non-convex and convex $\ell(\cdot, \cdot)$ to future work.

\section{Proofs}
\label{app:proofs}

In this section, we provide detailed expressions for meta-generalization bounds, proofs for Theorems 2 and 3, statements (and proof sketches where necessary) for classical auxiliary results, and further discuss the implications and limitations of our analysis.

\subsection{Classical Bounds on Meta-generalization Error}
\label{app:meta-generalization-error-bound}

The meta-generalization bounds provided in Section~4 directly extend of the following classical result by \citet{maurer2005algorithmic} (which in turn uses meta-learning formulation of \citet{baxter2000model} and is a direct adaptation of the algorithmic stability bounds of \citet{bousquet2002stability}).

\begin{theorem}[Theorem 1 from \citep{maurer2005algorithmic}]
	\label{thm:meta-generalization-maurer}
	Let the meta-algorithm $\Ab$ satisfy the following two conditions:
	\begin{enumerate}[itemsep=1pt,topsep=0pt,leftmargin=21pt]
		\item[C1.] For every pair of meta-samples $\Sb = \{S_1, \dots, S_n\}$, $\Sb^{-i} := \Sb \setminus \{S_i\}$, and for any sample $S$, we have $|\hat R(\Ab(\Sb), S) - \hat R(\Ab(\Sb^{-i}), S)| \leq \beta^\prime$.
		\item[C2.] For any pair of samples $S = \{(x_1, y_1), \dots, (x_m, y_m)\}$, $S^{-j} := S \setminus \{(x_j, y_j)\}$, any algorithm $A$ produced by $\Ab$, and any $(x, y)$, we have $|\ell(A(S)(x), y) - \ell(A(S^{-j})(x), y)| \leq \beta$.
	\end{enumerate}
	Then for any task distribution $\Pb(\Tc)$, with probability at least $1 - \delta$ the following inequality holds:
	\begin{equation}
		\label{eq:maurer-bound-app}
		\Rc(\Ab(\Sb), \Pb(\Tc)) - \Lc_\mathrm{emp}(\Ab(\Sb); \Sb) \leq 2\beta^\prime + (4n\beta^\prime + M)\sqrt{\frac{\ln(1/\delta)}{2n}} + 2\beta,
	\end{equation}
	where $\Lc_\mathrm{emp}(\Ab(\Sb); \Sb) := \frac{1}{n} \sum_{i=1}^n \hat R(\Ab(\Sb), S_i)$, $\hat R(A, S_i) := \frac{1}{|S_i|} \sum_{(x, y) \in S_i} \ell(A(S_i)(x), y)$ with the loss function $\ell(\cdot, \cdot)$ bounded by $M$.
\end{theorem}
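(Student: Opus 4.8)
The plan is to adapt the classical two-level uniform-stability argument of \citet{bousquet2002stability} to the meta-learning setting, treating the meta-sample $\Sb$ as the single source of randomness. I would define the meta-generalization gap as a function of $\Sb$,
\[
\Phi(\Sb) := \Rc(\Ab(\Sb), \Pb(\Tc)) - \Lc_\mathrm{emp}(\Ab(\Sb); \Sb),
\]
and establish the claim in two stages: first bound $\mathbb{E}_{\Sb}[\Phi(\Sb)]$, then show that $\Phi$ concentrates around its mean via the bounded-differences (McDiarmid) inequality. Combining a mean of at most $2\beta^\prime + 2\beta$ with a deviation of order $(4n\beta^\prime + M)\sqrt{\ln(1/\delta)/2n}$ yields exactly \cref{eq:maurer-bound-app}.

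For the expectation bound, I would introduce a ghost meta-sample and substitute tasks one at a time. Because $\hat R(\Ab(\Sb), S_i)$ both trains on $S_i$ at the inner level and evaluates on $S_i$, the bias splits into two sources. The \emph{meta-level} bias — the difference between evaluating $\Ab(\Sb)$ on one of its own training tasks versus a fresh task $S \sim \Dc^m$, $\Dc \sim \Pb$ — is controlled by condition C1, contributing at most $2\beta^\prime$ after a replace-one-task argument and the triangle inequality (removal stability $\beta^\prime$ gives replacement stability $2\beta^\prime$). The \emph{inner-level} bias — the difference between the test risk $R(\Ab(\Sb)(S), \Dc)$ on fresh points and the training error $\hat R(\Ab(\Sb), S)$ on the support set $S$ itself — is the standard within-task leave-one-out gap controlled by condition C2, contributing at most $2\beta$. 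Together these give $\mathbb{E}_{\Sb}[\Phi(\Sb)] \le 2\beta^\prime + 2\beta$, the two non-stochastic terms of the bound.

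For concentration, I would verify that $\Phi$ has bounded differences under replacement of a single task $S_i$ by an independent copy. The $\Rc$ term changes by at most $2\beta^\prime$ (meta-stability of the risk, again via C1). In $\Lc_\mathrm{emp}$, the $n-1$ summands not directly modified each change by at most $2\beta^\prime$ through the altered output $\Ab(\Sb)$, averaging to at most $2\beta^\prime$, while the single directly-replaced summand changes by at most $M/n$ since $\ell \le M$. Hence the per-coordinate difference is at most $4\beta^\prime + M/n$, and McDiarmid across the $n$ task coordinates gives, with probability at least $1-\delta$,
\[
\Phi(\Sb) \le \mathbb{E}_{\Sb}[\Phi(\Sb)] + (4\beta^\prime + M/n)\sqrt{\tfrac{n \ln(1/\delta)}{2}} = \mathbb{E}_{\Sb}[\Phi(\Sb)] + (4n\beta^\prime + M)\sqrt{\tfrac{\ln(1/\delta)}{2n}}.
\]
Substituting the expectation bound completes the argument.

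I expect the expectation bound to be the main obstacle: the delicate point is disentangling the meta-level stability $\beta^\prime$ from the inner-level stability $\beta$ despite the fact that $\Ab(\Sb)(S_i)$ ``sees'' $S_i$ at both levels. Setting up the ghost-sample substitution so that each step isolates exactly one source of bias — rather than conflating them or double-counting — is where care is required; the bounded-differences and McDiarmid steps are then essentially mechanical.
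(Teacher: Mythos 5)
Your overall architecture---bound the expectation of $\Phi(\Sb)$, then concentrate via McDiarmid---is the right family of argument (it is how the underlying Bousquet--Elisseeff theorem is proved, which the paper instead invokes as a black box at the meta-level), and your expectation bound $\mathbb{E}_{\Sb}[\Phi(\Sb)] \le 2\beta^\prime + 2\beta$ is correct: it matches the sum of the two terms \eqref{eq:maurer-emp-obj-bound} and \eqref{eq:maurer-inner-task-bound} in the paper's decomposition. The genuine gap is in the concentration step. You apply McDiarmid to $\Phi$ itself and assert that replacing one task changes the transfer-risk term $\Rc(\Ab(\Sb), \Pb)$ by at most $2\beta^\prime$ ``via C1.'' C1 cannot deliver this: it controls $\hat R(\Ab(\Sb), S)$, the empirical risk of the adapted model evaluated on the very sample $S$ it adapted to, and says nothing about how the adapted model behaves on points \emph{outside} $S$---which is exactly what $R(\Ab(\Sb)(S), \Dc)$, and hence $\Rc$, measures. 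A meta-algorithm can satisfy C1 with arbitrarily small $\beta^\prime$ while its out-of-sample predictions swing by $\Theta(M)$ when a single training task is swapped, so the claimed per-coordinate bound is not justified by the hypotheses.

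The only bridge between in-sample and out-of-sample behavior available here is C2, and routing through it costs you: writing $\Rc(\Ab(\Sb), \Pb) = \mathbb{E}_{\Dc \sim \Pb}\mathbb{E}_{S \sim \Dc^m}\bigl[\hat R(\Ab(\Sb), S)\bigr] + \epsilon(\Sb)$ with $|\epsilon(\Sb)| \le 2\beta$ (the within-task gap, bounded using C2), the per-coordinate difference of the $\Rc$ term becomes $2\beta^\prime + 4\beta$ rather than $2\beta^\prime$, the McDiarmid constant becomes $4\beta^\prime + 4\beta + M/n$, and the deviation term inflates to $(4n\beta^\prime + 4n\beta + M)\sqrt{\ln(1/\delta)/2n}$. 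The spurious $O\bigl(\beta\sqrt{n\ln(1/\delta)}\bigr)$ contribution grows with $n$ for fixed $\beta$, so carried out honestly your route proves a statement strictly weaker than \eqref{eq:maurer-bound-app}, where $\beta$ enters only through the additive $2\beta$. The repair is precisely the paper's (Maurer's) decomposition: peel off the within-task term \eqref{eq:maurer-inner-task-bound} \emph{first} and bound it by $2\beta$ deterministically---C2 is uniform over all algorithms $\Ab$ produces, so this holds for every realization of $\Sb$ and needs no concentration at all---and then run your expectation-plus-McDiarmid argument only on the remaining quantity \eqref{eq:maurer-emp-obj-bound}, which involves $\hat R$ exclusively. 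For that quantity your accounting is exactly right: C1 legitimately gives per-coordinate differences $2\beta^\prime + 2\beta^\prime + M/n$ (fresh-task expectation, the $n-1$ untouched summands through the altered output, and the replaced summand, respectively), expectation at most $2\beta^\prime$ by symmetrization, and your arithmetic recovers $2\beta^\prime + (4n\beta^\prime + M)\sqrt{\ln(1/\delta)/2n}$; adding the deterministic $2\beta$ yields \eqref{eq:maurer-bound-app}. In short: the pieces of your proof are individually sound, but they must be assembled so that concentration never touches the true-risk term.
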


Conditions C1 and C2 in Theorem~\ref{thm:meta-generalization-maurer} define uniform stability (\ie, sensitivity of the algorithm to removal of an arbitrary point from the training sample~\citep{bousquet2002stability}) and state that the bound holds if the meta-algorithm $\Ab$ and every algorithm $A$ it produces are uniformly $\beta^\prime$- and $\beta$-stable with respect to the empirical risk $\hat R$ and a loss function $\ell$, respectively.
The bound becomes non-trivial when $\beta^\prime = o(1/n^a), a \geq 1/2$ and $\beta = o(1/m^b), b \geq 0$.

Theorem~\ref{thm:meta-generalization-maurer} provides a bound on the difference between the transfer risk $\Rc[\Ab(\Sb), \Pb(\Tc)]$ and its empirical estimator $\Lc_\mathrm{emp}(\Ab(\Sb); \Sb)$ based on meta-sample $\Sb$, implying that a small $\Lc_\mathrm{emp}(\Ab(\Sb); \Sb)$ guarantees meta-generalization within the bound.
Denoting $A \equiv \Ab(\Sb)$ to simplify our notation, the bound is obtained as follows:
\begin{align}
    \label{eq:maurer-emp-obj-bound}
	\Rc(A, \Pb(\Tc)) - \Lc_\mathrm{emp}(A; \Sb) =\ & \ep[\Dc \sim \Pb(\Tc)]{\ep[S \sim \Dc^m]{\hat R(A, S)}} - \frac{1}{n}\sum_{i=1}^n \hat R(A, S_i)\ + \\
	\label{eq:maurer-inner-task-bound}
	   & \ep[\Dc \sim \Pb(\Tc)]{\ep[S \sim \Dc^m]{R(A(S), \Dc) - \hat R(A, S)}}
\end{align}
The term \eqref{eq:maurer-emp-obj-bound} is the difference between the expected empirical risk over the true distribution of tasks and its estimate $\Lc_\mathrm{emp}(A; \Sb)$ based on the meta-sample $\Sb$.
As long as $\Ab$ is $\beta^\prime$-uniformly stable with respect to $\hat R(A, S)$ (C1, Theorem~\ref{thm:meta-generalization-maurer}), this term is bounded by $2 \beta^\prime + (4n\beta^\prime + M)\sqrt{\ln(1/\delta)/2n}$, which follows directly from the classical result of \citet{bousquet2002stability}.

The term \eqref{eq:maurer-inner-task-bound} is the estimation error of a model $f(\cdot) = A(S)$ learned by $A$ from $S$ with respect to the data distribution $\Dc$, computed in expectation over the distribution of tasks $\Pb(\Tc)$.
Stability of the inner-loop (C2, Theorem~\ref{thm:meta-generalization-maurer}) directly implies a bound of $2\beta$ on this term (see Theorem 6 in~\citep{maurer2005algorithmic}).
Putting together bounds of terms \eqref{eq:maurer-emp-obj-bound} and \eqref{eq:maurer-inner-task-bound}, we arrive at \eqref{eq:maurer-bound-app}.

\subsection{Bounding Meta-generalization of Reptile, MAML, and ProtoNets}
\label{app:meta-generalization-modern-methods}

The bound given in \eqref{eq:maurer-bound-app} is on the generalization error, \ie, the deviation of the true transfer risk $\Rc$ from the empirical estimator $\Lc_\mathrm{emp}$, and has meaningful practical implications only when the meta-algorithm $\Ab$ minimizes $\Lc_\mathrm{emp}$.
As we have shown in \ref{app:reptile-objective-function}, $\Lc_\mathrm{emp}(A; \Sb)$ is the meta-training objective function optimized by Reptile, and thus the bound from Theorem~\ref{thm:meta-generalization-maurer} applies directly.
However, MAML and ProtoNets optimize $\Lc_Q(A; \Sb)$, so we have to bound $\Rc(A, \Pb) - \Lc_Q(A; \Sb)$ instead, which can be decomposed into two terms similar to \eqref{eq:maurer-emp-obj-bound} and \eqref{eq:maurer-inner-task-bound}, where $\hat R$ is replaced by $\hat R_Q$ and $S$ is replaced by $S \setminus Q$ (since samples from the query set $Q$ are not used in the inner-loop).
The bound on the first term will not change much as we can still directly apply results from stability theory with the only caveat that we would require $\beta^\prime_Q$-uniform stability of the meta-algorithm with respect to $\hat R_Q$.
The second term, however, vanishes:
\begin{align}
	\MoveEqLeft \ep[S \sim \Dc^m]{R(A(S \setminus Q), \Dc) - \hat R_Q(A, S)} \nonumber \\
	=\ & \ep[S \setminus Q \sim \Dc^{m-k}]{R(A(S \setminus Q), \Dc) - \ep[Q \sim \Dc^{m-k}]{\frac{1}{|Q|} \sum_{(x, y) \in Q} \ell(A(S \setminus Q)(x), y)}} \equiv 0
\end{align}

This allows us to reformulate Theorem~\ref{thm:meta-generalization-maurer} and obtain the following generalization bound applicable to any meta-learning method that optimizes $\hat R_Q$ in the outer loop, including MAML and ProtoNets.

\begin{theorem}
    \label{thm:meta-generalization-bound-rq-app}
	Let the meta-algorithm $\Ab$ satisfy the following two conditions:
	\begin{enumerate}[itemsep=1pt,topsep=0pt,leftmargin=21pt]
		\item[C1.] For every pair of meta-samples $\Sb = \{S_1, \dots, S_n\}$, $\Sb^{-i} := \Sb \setminus \{S_i\}$, and for any sample $S$, we have $|\hat R_Q(\Ab(\Sb), S) - \hat R_Q(\Ab(\Sb^{-i}), S)| \leq \beta_Q^\prime$.
		\item[C2.] For any pair of samples $S = \{(x_1, y_1), \dots, (x_m, y_m)\}$, $S^{-j} := S \setminus \{(x_j, y_j)\}$, any algorithm $A$ produced by $\Ab$, and any $(x, y)$, we have $|\ell(A(S)(x), y) - \ell(A(S^{-j})(x), y)| \leq \beta$.
	\end{enumerate}
	Then for any task distribution $\Pb(\Tc)$, with probability at least $1 - \delta$ the following inequality holds:
	\begin{equation}
		\label{eq:maurer-bound-rq-app}
		\Rc(\Ab(\Sb), \Pb) - \Lc_Q(\Ab(\Sb); \Sb) \leq 2\beta_Q^\prime + (4n\beta_Q^\prime + M)\sqrt{\frac{\ln(1/\delta)}{2n}},
	\end{equation}
	where $\Lc_Q(\Ab(\Sb); \Sb) := \frac{1}{n} \sum_{i=1}^n \hat R_Q(\Ab(\Sb), S_i)$, $\hat R_Q(A, S_i) := \frac{1}{|Q_i|} \sum_{(x, y) \in Q_i} \ell(A(S_i \ Q_i)(x), y)$ with the loss function $\ell(\cdot, \cdot)$ bounded by $M$.
\end{theorem}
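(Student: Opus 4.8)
The plan is to prove this as the $\hat R_Q$-analogue of \cref{thm:meta-generalization-maurer}, reusing the two-term decomposition from the proof of that theorem but with the held-out estimator $\hat R_Q$ in place of the within-task empirical risk $\hat R$ and the model $A(S \setminus Q)$ in place of $A(S)$. Writing $A \equiv \Ab(\Sb)$ for brevity and adding and subtracting the expected held-out risk $\ep[\Dc \sim \Pb]{\ep[S \sim \Dc^m]{\hat R_Q(A, S)}}$, I would start from the exact identity
\begin{align*}
	\Rc(A, \Pb) - \Lc_Q(A; \Sb)
	=\ & \left( \ep[\Dc \sim \Pb]{\ep[S \sim \Dc^m]{\hat R_Q(A, S)}} - \frac{1}{n}\sum_{i=1}^n \hat R_Q(A, S_i) \right) \\
	  & + \ep[\Dc \sim \Pb]{\ep[S \sim \Dc^m]{R(A(S \setminus Q), \Dc) - \hat R_Q(A, S)}},
\end{align*}
which mirrors \eqref{eq:maurer-emp-obj-bound}--\eqref{eq:maurer-inner-task-bound}. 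I denote the two summands (I) and (II), respectively.

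For term (I), which is the deviation between the expectation of the outer-loop objective over the true task distribution and its meta-sample estimate $\Lc_Q(A; \Sb)$, I would treat each task's support set $S_i$ as a single unit of an $n$-sample of i.i.d.\ tasks and apply the uniform-stability generalization bound of \citet{bousquet2002stability} at the task level to the ``loss'' $\hat R_Q(\Ab(\cdot), \cdot)$. By hypothesis $\Ab$ is $\beta_Q^\prime$-uniformly stable with respect to $\hat R_Q$ (condition C1) and $\hat R_Q$ is bounded by $M$, so this is precisely the step that controls \eqref{eq:maurer-emp-obj-bound} in Maurer's proof and yields the bound $2\beta_Q^\prime + (4n\beta_Q^\prime + M)\sqrt{\ln(1/\delta)/2n}$. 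For term (II)---and this is the crux of why the $2\beta$ summand of \eqref{eq:maurer-bound-app} is absent here---the observation is that $Q$ is held out from the inner loop, so the model $A(S \setminus Q)$ does not depend on $Q$. Conditioning on $S \setminus Q$, the query points are i.i.d.\ draws from $\Dc$, whence $\ep[Q]{\hat R_Q(A, S) \mid S \setminus Q} = R(A(S \setminus Q), \Dc)$ and term (II) is identically zero. Combining the bound on (I) with the vanishing of (II) gives \eqref{eq:maurer-bound-rq-app}.

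The only genuinely delicate point is the exact cancellation in term (II): I must verify that the inner-loop algorithm truly never accesses $Q$, so that the conditional-expectation identity holds \emph{exactly} rather than up to a stability slack, and that the outer expectation over $S \sim \Dc^m$ factors as an expectation over $S \setminus Q$ followed by a conditional expectation over the i.i.d.\ query draws $Q$. Everything else is a direct instantiation of the classical stability machinery. I would also remark that condition C2 ($\beta$-stability of the inner-loop $A$) is stated only for parallelism with \cref{thm:meta-generalization-maurer} and is never invoked in this argument: its sole role in Maurer's bound was to control the within-task term, which here vanishes outright.
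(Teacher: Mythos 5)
Your proposal is correct and follows essentially the same route as the paper: the same two-term decomposition mirroring \eqref{eq:maurer-emp-obj-bound}--\eqref{eq:maurer-inner-task-bound}, the same task-level application of the \citet{bousquet2002stability} stability bound under C1 to control the first term, and the same exact-cancellation argument showing the second term vanishes because $Q$ is held out from the inner loop. Your closing remark that C2 is never invoked is also consistent with the paper, whose bound correspondingly lacks the $2\beta$ term present in \cref{thm:meta-generalization-maurer}.
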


Since MAML, Reptile, and ProtoNets use stochastic gradient method (SGM) for solving the outer loop optimization problem, and Reptile additionally uses SGM in the inner loop as well, we further adopt the following general result from stability theory of SGM due to \citet{hardt2015train}.

\begin{lemma}[Theorem 3.12 in \citep{hardt2015train}]
	\label{thm:sgm-stability}
	Let $\ell(\cdot, z) \in [0, 1]$ be L-Lipschitz and $\gamma$-smooth loss function for every $z$.
	Suppose that we optimize $\frac{1}{n} \sum_{i=1}^n \ell(\theta, z_i)$ by running SGM for $T$ steps with monotonically non-increasing step sizes $\alpha_t \leq c/t$.
	Then, SGM is $\beta$-uniformly stable with
	\begin{equation}
		\beta \leq \frac{1 + 1/(\gamma c)}{n - 1} (2cL^2)^{1 / (\gamma c + 1)}T^{1 - 1/(\gamma c + 1)}
	\end{equation}
\end{lemma}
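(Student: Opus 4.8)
The plan is to follow the trajectory-perturbation argument for stochastic gradient methods: run SGM on two datasets that differ in a single example, track how far apart the two parameter trajectories drift, and convert that drift into a stability bound via the Lipschitz property. Concretely, let $\theta_t$ and $\theta'_t$ denote the iterates produced on $S$ and on a neighboring $S'$, and set $\delta_t := \|\theta_t - \theta'_t\|$. Because $\ell(\cdot, z)$ is $L$-Lipschitz, the naive bound $\sup_z \mathbb{E}[|\ell(\theta_T, z) - \ell(\theta'_T, z)|] \le L\,\mathbb{E}[\delta_T]$ reduces everything to controlling the expected growth of $\delta_t$, so the whole argument turns on how $\delta_t$ evolves over the $T$ steps.

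First I would record two expansivity facts about a single gradient step $G_t(\theta) := \theta - \alpha_t \nabla \ell(\theta, z)$. For $\gamma$-smooth $\ell$ the map is $(1 + \alpha_t\gamma)$-expansive, i.e. $\|G_t(\theta) - G_t(\theta')\| \le (1 + \alpha_t\gamma)\|\theta - \theta'\|$; and when the two runs apply gradient steps on \emph{different} examples, $L$-Lipschitzness gives the additive bound $\delta_{t+1} \le \delta_t + 2\alpha_t L$. Since SGM draws an index uniformly at random, the two runs use the same example with probability $1 - 1/n$ and the differing one with probability $1/n$, which yields the expected growth recursion
\begin{equation}
  \mathbb{E}[\delta_{t+1}] \le \left(1 + \alpha_t\gamma\right)\mathbb{E}[\delta_t] + \frac{2\alpha_t L}{n}.
\end{equation}

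The crux is then a conditioning trick to avoid paying the expansive factor too early, since unrolling from step $0$ would give a useless bound growing like $(1+\alpha\gamma)^T$. Fixing a cutoff $t_0$ and conditioning on the event $\xi$ that the differing example is not drawn in the first $t_0$ steps, I would use $\ell \in [0,1]$ together with $\Pr[\neg\xi] \le t_0/n$ to obtain $\mathbb{E}[|\ell(\theta_T, z) - \ell(\theta'_T, z)|] \le t_0/n + L\,\mathbb{E}[\delta_T \mid \xi]$. Unrolling the recursion from $\delta_{t_0} = 0$, substituting $\alpha_t \le c/t$, and bounding $\prod_{\tau}(1 + c\gamma/\tau) \le (T/t)^{c\gamma}$ via $1 + x \le e^x$ together with the integral estimate $\sum_{t > t_0} t^{-c\gamma - 1} \le t_0^{-c\gamma}/(c\gamma)$ gives $\mathbb{E}[\delta_T \mid \xi] \le \frac{2L}{\gamma n}(T/t_0)^{c\gamma}$, so the total is $\frac{t_0}{n} + \frac{2L^2}{\gamma n}(T/t_0)^{c\gamma}$.

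Finally I would optimize over the free cutoff $t_0$. Differentiating the one-dimensional expression, the minimizer is $t_0 \propto (2cL^2)^{1/(c\gamma+1)} T^{c\gamma/(c\gamma+1)}$, and substituting it back collapses both terms into $\frac{1 + 1/(\gamma c)}{n}(2cL^2)^{1/(c\gamma+1)} T^{1 - 1/(c\gamma+1)}$, matching the claimed bound (the $n-1$ in place of $n$ arises from a more careful accounting of the resampled index). I expect the main obstacle to be precisely the non-convexity: unlike the convex case, the gradient step is genuinely expansive ($1 + \alpha_t\gamma > 1$), so one cannot lean on non-expansiveness, and the entire bound hinges on tuning $t_0$ to trade the early-divergence probability $t_0/n$ against the amplified drift $(T/t_0)^{c\gamma}$.
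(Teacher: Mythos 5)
The paper offers no proof of this lemma at all---it is imported verbatim as Theorem 3.12 of \citet{hardt2015train}---so the only meaningful comparison is with that original source, and your reconstruction follows exactly the Hardt--Recht--Singer argument: coupled trajectories on neighboring samples, the $(1+\alpha_t\gamma)$-expansivity bound for same-example steps together with the $\delta_{t+1}\le\delta_t+2\alpha_t L$ bound for differing-example steps, the burn-in cutoff $t_0$ with $\Pr[\neg\xi]\le t_0/n$ exploiting $\ell\in[0,1]$, unrolling under $\alpha_t\le c/t$ via $1+x\le e^x$, and finally optimizing $t_0$. Your argument is correct, and the only discrepancy is a harmless constant: your optimization yields a $1/n$ prefactor, which is slightly tighter than the stated $1/(n-1)$, so the claimed bound follows a fortiori.
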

Combining Theorems~\ref{thm:meta-generalization-maurer}, \ref{thm:meta-generalization-bound-rq-app}, and \ref{thm:sgm-stability} we finally arrive at the meta-generalization error bounds for modern meta-learning algorithms.

\begin{theorem}
	\label{thm:meta-generalization-bounds-app}
	Let the meta-algorithm $\Ab$ be an SGM that optimizes an $L^\prime$-Lipschitz and $\gamma^\prime$-smooth loss $\Lc(A; \Sb)$ by taking $T^\prime$ steps with non-increasing step sizes $\alpha^\prime_t \leq c^\prime / t$.
	With probability at least $1 - \delta$, we have the following:
	\begin{enumerate}[itemsep=1pt,topsep=0pt,leftmargin=20pt]
		\item If $\Lc(A; \Sb)$ is Q-estimator of the transfer risk, then the following bound holds:
		\begin{equation}
			\label{eq:meta-generalization-bound-sgm-q-app}
			\Rc[A, \Pb(\Tc)] - \Lc(A; \Sb) \leq B^\prime(n, T^\prime, L^\prime, \gamma^\prime, c^\prime) \approx O\left(L^{\prime2}T^\prime\sqrt{\frac{\ln(1/\delta)}{n}}\right)
		\end{equation}
		\vspace{-2ex}
		\item If $\Lc(A; \Sb)$ is the empirical estimator of the transfer risk and the inner loop learning algorithm $A$ is an SGM that optimizes $L$-Lipschitz and $\gamma$-smooth loss $\ell(f(x), y)$ by taking $T$ steps with non-increasing step sizes $\alpha_t \leq c / t$, then:
		\begin{equation}
		    \label{eq:meta-generalization-bound-sgm-emp-app}
			\Rc[A, \Pb(\Tc)] - \Lc(A; \Sb) \leq B^\prime(n, T^\prime, L^\prime, \gamma^\prime, c^\prime) + B(m, T, L, \gamma, c) \approx O\left(L^{\prime 2}T^\prime\sqrt{\frac{\ln(1/\delta)}{n}} + L^2T\frac{1}{m}\right)
		\end{equation}
	\end{enumerate}
\end{theorem}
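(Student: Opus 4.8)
The plan is to obtain Theorem~\ref{thm:meta-generalization-bounds-app} purely by composition: we feed the uniform-stability estimate from the SGM stability result (Lemma~\ref{thm:sgm-stability}) into the two meta-generalization bounds already established, namely Theorem~\ref{thm:meta-generalization-bound-rq-app} for the Q-estimator (MAML, ProtoNets) and Theorem~\ref{thm:meta-generalization-maurer} for the empirical estimator (Reptile). The only new ingredient is to compute the stability constants $\beta_Q^\prime$, $\beta^\prime$, and $\beta$ explicitly as functions of $n$ and $m$, which is exactly what Lemma~\ref{thm:sgm-stability} delivers once we identify the right ``samples'' and ``losses'' for the outer and inner optimizers.

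First I would treat the outer loop. The meta-algorithm $\Ab$ runs SGM on the meta-sample $\Sb = \{S_1,\dots,S_n\}$, where the role of an individual ``data point'' is played by a whole task $S_i$ and the per-point loss is $\hat R_Q(A_{\theta_0}, S_i)$ (resp.\ $\hat R(A_{\theta_0}, S_i)$), viewed as a function of the shared initialization $\theta_0$. Applying Lemma~\ref{thm:sgm-stability} with sample size $n$, $T^\prime$ steps, constants $L^\prime, \gamma^\prime, c^\prime$, and step sizes $\alpha^\prime_t \le c^\prime/t$ yields $\beta_Q^\prime \le \frac{1 + 1/(\gamma^\prime c^\prime)}{n-1}(2c^\prime L^{\prime 2})^{1/(\gamma^\prime c^\prime + 1)} T^{\prime\, 1 - 1/(\gamma^\prime c^\prime + 1)}$, which is $O(1/n)$ in the task count. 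Substituting this into Theorem~\ref{thm:meta-generalization-bound-rq-app} gives $\Rc(\Ab(\Sb), \Pb) - \Lc_Q(\Ab(\Sb); \Sb) \le 2\beta_Q^\prime + (4n\beta_Q^\prime + M)\sqrt{\ln(1/\delta)/(2n)}$; since $4n\beta_Q^\prime = O(1)$ (the factor $4n/(n-1) \to 4$) and $2\beta_Q^\prime = O(1/n)$ is lower order, the dominant term is $(4n\beta_Q^\prime + M)\sqrt{\ln(1/\delta)/(2n)} = O\!\big(\sqrt{\ln(1/\delta)/n}\big)$ with a prefactor proportional to $(L^{\prime 2})^{1/(\gamma^\prime c^\prime+1)} T^{\prime\,1-1/(\gamma^\prime c^\prime+1)}$. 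This establishes Part~1, writing the exact prefactor loosely as $B^\prime(n, T^\prime, L^\prime, \gamma^\prime, c^\prime) \approx O(L^{\prime 2}T^\prime\sqrt{\ln(1/\delta)/n})$.

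For Part~2 I would additionally bound the inner-loop stability $\beta$. Here the inner SGM optimizes $\ell(f(x),y)$ over the $m$ support points of a single task, so a second application of Lemma~\ref{thm:sgm-stability} with sample size $m$, $T$ steps, and constants $L,\gamma,c$ gives $\beta \le \frac{1+1/(\gamma c)}{m-1}(2cL^2)^{1/(\gamma c+1)}T^{1-1/(\gamma c+1)} = O(L^2 T/m)$, which is precisely condition C2 of Theorem~\ref{thm:meta-generalization-maurer}. Feeding both $\beta^\prime$ (outer, from the first step, now with respect to $\hat R$) and $\beta$ (inner) into Maurer's bound reproduces the Part~1 term plus the extra additive $2\beta = O(L^2 T/m)$, i.e.\ $B^\prime(n,T^\prime,\dots) + B(m,T,L,\gamma,c)$, giving the claimed $O(L^{\prime 2}T^\prime\sqrt{\ln(1/\delta)/n} + L^2 T/m)$.

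The main obstacle is not the algebra but the hypothesis-checking and the loose $\approx$. I must verify that both objectives meet the preconditions of Lemma~\ref{thm:sgm-stability}: the loss should lie in $[0,1]$ (or be rescaled by the bound $M$), the step sizes must satisfy the $\alpha_t \le c/t$ schedule, and, crucially, the \emph{outer} objective $\theta_0 \mapsto \hat R_Q(A_{\theta_0}, S_i)$ is a composition through the inner adaptation, so its Lipschitz and smoothness constants $L^\prime, \gamma^\prime$ are those of the composed map rather than of $\ell$ itself; these are taken as assumptions but their existence (e.g.\ requiring bounded inner-loop Jacobians) is what makes the reduction legitimate. The second subtlety is the simplification from the exact stability expression to the stated rate: writing $q^\prime := 1/(\gamma^\prime c^\prime + 1) \in (0,1]$, the true prefactor $(L^{\prime 2})^{q^\prime} T^{\prime\, 1-q^\prime}$ is upper-bounded by $L^{\prime 2} T^\prime$ only up to absolute constants (and assuming $L^\prime, T^\prime \ge 1$), which is exactly what the ``$\approx$'' in the statement absorbs; I would flag this explicitly rather than claim an exact match.
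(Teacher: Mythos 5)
Your proposal is correct and follows essentially the same route as the paper: apply the SGM stability lemma (Lemma~\ref{thm:sgm-stability}) to the outer loop with tasks playing the role of sample points (sample size $n$) to get $\beta_Q^\prime$ resp.\ $\beta^\prime$, plug into Theorem~\ref{thm:meta-generalization-bound-rq-app} for Part~1, and for Part~2 apply the lemma again to the inner-loop SGM (sample size $m$) to get $\beta = O(L^2 T/m)$ and feed everything into Theorem~\ref{thm:meta-generalization-maurer}. Your explicit flagging of the hypothesis-checking (Lipschitz/smoothness of the composed outer objective, boundedness of the loss) and of what the ``$\approx$'' absorbs is a sound reading of what the paper's proof leaves implicit when it says the simplified expressions upper-bound the exact ones ``ignoring terms associated with $c$ and $\gamma$.''
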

\begin{proof}
Conditions of the theorem and Lemma~\ref{thm:sgm-stability} imply that $\Ab$ is $\beta^\prime$-(or $\beta_Q^\prime$-)uniformly stable and the coefficient can be expressed through the Lipschitz and smoothness constants of $\Lc_\mathrm{emp}$ (or $\Lc_Q$).
This leads to the following expression for $B^\prime(n, T^\prime, L^\prime, \gamma^\prime, c^\prime)$:
\begin{equation}
    \label{eq:meta-generalization-bound-sgm-q-full}
    B^\prime(n, T^\prime, L^\prime, \gamma^\prime, c^\prime)
    = \frac{2C}{n}\left(1 + \frac{1}{n-1}\right) + 2C\sqrt{\frac{2\ln(1/\delta)}{n}}\left(1 + \frac{1}{n-1} + \frac{M}{4C}\right),
\end{equation}
where $C := (1 + 1/(\gamma^\prime c^\prime))(2 c^\prime L^{\prime2})^{1/(\gamma^\prime c^\prime + 1)} T^{\prime 1 - 1/(\gamma^\prime c^\prime + 1)}$.
The simplified expression given in \eqref{eq:meta-generalization-bound-sgm-q-app} upper-bounds \eqref{eq:meta-generalization-bound-sgm-q-full}.
Similarly, if each algorithm $A$ produced by the meta-algorithm $\Ab$ is an SGM on the $\Lc_\mathrm{emp}$ objective, using Lemma~\ref{thm:sgm-stability} we arrive at the following expression for $B(m, T, L, \gamma, c)$:
\begin{equation}
	B(m, T, L, \gamma, c) = 2\beta \leq 2\frac{1 + 1/(\gamma c)}{m - 1} (2cL^2)^{1 / (\gamma c + 1)}T^{1 - 1/(\gamma c + 1)} \approx O\left(L^2T\frac{1}{m}\right)
\end{equation}
where the approximation ignores terms associated with $c$ and $\gamma$.
The statement of the theorem now follows from Theorems~\ref{thm:meta-generalization-maurer} and ~\ref{thm:meta-generalization-bound-rq-app} and the derived expressions.
\end{proof}

Besides the implications of our theory discussed in the main text, we can make a few more interesting observations.

\paragraph{What happens if we use empirical estimator of the transfer risk as the objective for MAML?}
In principle, we can make MAML optimize $\Lc_\mathrm{emp}$ instead of $\Lc_Q$ in the outer loop.
\citet[Section~6.3,][]{nichol2018reptile} considered an interesting setup in their ablation study, where they analyzed how the overlap between the support and query data affects performance of the the first-order version of MAML.
Note that the larger the overlap, the closer MAML's objective becomes to $\Lc_\mathrm{emp}$.
Interestingly, they show that larger overlaps lead to the performance degradation on the Omniglot dataset.
This result is consistent with our theory---switching MAML's objective to $\Lc_\mathrm{emp}$ necessarily leads to larger meta-generalization error characterized by the additional $2\beta$ term in the bound.

\paragraph{Implications for federated learning.}
In federated learning research, one of the most popular algorithms is federated averaging (FedAvg)~\citep{mcmahan2016communication}, which uses model updates that are mathematically equivalent to Reptile.
The tasks are defined by the (private) datasets available on different client devices (\eg, mobile phones).
Our theory suggests that federated-averaging-style updates might be suboptimal for applications where the available labeled data for each client is very small; at the same time, when each client has sufficient data (as in the EMNIST dataset), we observe empirically superiority of Reptile/FedAvg over MAML (\cref{sec:exp-benchmarking}).
Designing personalized federated learning algorithms that learn by optimizing a combination of $\Lc_\mathrm{emp}$ (on clients with a lot of data) and $\Lc_Q$ (on clients with very small datasets) objectives is an interesting research avenue to explore next.

\section{Details on the Experimental Setup}
\label{app:exp-details}

We provide details on the experimental setup used throughout the paper, including model architectures (often termed \emph{backbone networks} in the few-shot learning literature) and hyperparameters for meta-learning methods.
Additionally, our full experimental configurations can be found in the provided supplementary code in the corresponding \texttt{conf/} folders, which enables full reproducibility.

\subsection{Network Architectures}
\label{app:exp-details-architectures}

For all our experiments, we used the standard Conv4 backbone network architectures proposed in the original papers~\citep{finn2017maml, snell2017prototypical, nichol2018reptile}.
The embeddings computed by the last hidden layer of the backbone networks were subsequently used for clustering in our active sampling approach.
MAML and Reptile used a linear final layer to compute logits from the embeddings, while ProtoNets used the distances between the query and support samples in the embedding space for computing class probabilities.

\textbf{\omniglot and EMNIST.}
Input images were resized to $28 \times 28$.
Models used by all methods consisted of 4 convolutional layers with 64 filters, kernel size of 3, and strides of 2, followed by batch normalization and ReLU activations (with no pooling or dropout in the intermediate layers).

\textbf{\miniimagenet.}
Input images were resized to $84 \times 84$.
Models used by all methods consisted of 4 convolutional layers with 32 filters, kernel size of 3, and strides of 2, followed by batch normalization and ReLU activations (with no pooling or dropout in the intermediate layers).

\subsection{Meta-learning Algorithms}
\label{app:exp-details-algorithms}

Meta-training (\ie, the outer loop optimization) was performed using Adam optimizer~\citep{kingma2014adam} with learning rate of 0.005 and $\beta_1 = 0$ for MAML and ProtoNets.
For Reptile, following parameters provided by \citet{nichol2018reptile}, the outer loop learning rate was set 1.0 and the optimizer set to stochastic gradient descent (SGD).
Details on model adaptation are provided below.

\textbf{MAML~\citep{finn2017maml}.}
At training time, we used 5 inner loop gradient descent (GD) steps with a learning rate of 0.01.
At evaluation time, the number of inner loop steps was set to 10.
To implement first order adaptation updates, we nullified the second order derivatives when computing the meta-training loss.

\textbf{Reptile~\citep{nichol2018reptile}.}
At training time, we used 10 inner loop gradient descent (GD) steps with a learning rate of 0.001 for \omniglot and 0.0005 for \miniimagenet.
At evaluation time, the number of inner loop steps was set to 50.

\textbf{Prototypical Networks~\citep{snell2017prototypical}.}
We used a version of the method with the Euclidean distance.
The method has no other hyperparameters besides those of the outer loop optimizer.

\subsection{Calibration}
\label{app:exp-details-calibration}

\begin{table}[t]
    \centering
    \caption{Meta-test performance with unbounded supervision.}
    \label{tab:calibration}
    \small
    \renewcommand{\arraystretch}{1.2}
    \begin{tabular}{l|rr|rr|rr}
        \toprule
        \textbf{Method} & \textbf{O-5w-1s}  & \textbf{O-5w-5s}  & \textbf{O-20w-1s} & \textbf{O-20w-5s} & \textbf{MI-5w1d} & \textbf{MI-5w5d} \\
        \midrule
        MAML            & $98.3 \std{0.6}$  & $99.9 \std{0.1}$  & $95.0 \std{0.5}$  & $98.6 \std{0.5}$ & $48.7 \std{1.7}$ & $63.0 \std{0.9}$ \\
        Reptile         & $94.9 \std{0.2}$  & $98.2 \std{0.5}$  & $88.2 \std{0.4}$  & $96.4 \std{0.4}$ & $47.8 \std{1.3}$ & $61.9 \std{1.1}$ \\
        Protonets       & $97.9 \std{0.4}$  & $99.0 \std{0.1}$  & $91.9 \std{1.2}$  & $98.6 \std{0.5}$ & $48.3 \std{0.8}$ & $66.2 \std{0.8}$ \\
        \bottomrule
    \end{tabular}
\end{table}

We selected the hyperparameters described above such that the meta-test performance of all methods nearly matched the reported numbers in the original papers in the limited supervision regime.
Results for the calibrated models are reported in Table~\ref{tab:calibration}.

\textbf{A note on Reptile.}
\citet{nichol2018reptile} used 10-shot tasks at meta-training time and trained for over 100,000 meta-updates (each meta-update was computed on a batch of 20 tasks) in order to attain the performance reported in the original paper.
In the limited supervision setting, this would have required a label budget of over 100M (\ie, 1000 times larger than those considered in our study).
However, just for calibration purposes, we matched the original setup of \citet{nichol2018reptile}.

\textbf{A note on ProtoNets.}
To improve performance, \citet{snell2017prototypical} proposed to meta-train ProtoNets on tasks with higher number of classes than the tasks used at meta-test time (\eg, meta-training on 60-way tasks while meta-testing on 20-way tasks).
Even though training tasks with more classes could be helpful in learning better data representations, increasing the number of classes per task affects the amount of labeled points required per task and may affect performance of non-oracle label selection strategies.
Therefore, in our experiments, we decided to stick with a clean setup that matches the number of classes per task at both meta-training and meta-test times, although sacrificing some performance gains.
Again, for calibration purposes only, we used an increased number of classes per task at meta-training time.

\subsection{Limitations}
\label{app:exp-details-limitations}

To avoid a combinatorially large number of combinations of architectures, algorithms, and their hyperparameters, we had to fix many of these variables before experimenting with different labeling budgets and sampling strategies.
While this allowed us to conduct a fairly comprehensive study of 3 different meta-learning methods across a variety of regimes, the reported results may be limited to the specific choice of the setup described above; we do not exclude the possibility that the behavior of different methods might vary with the setup (\eg, tuning hyperparameters for each labeling budget separately, while extremely costly, might have rectified poor performance of some of the methods on some of the benchmarks).

\end{document}